\newtheorem{Proposition}{Proposition}
\newtheorem{Lemma}{Lemma}
\newtheorem{Corollary}{Corollary}
\newcommand{\Sp}{{\mathrm{Sp}}}
\newcommand{\rank}{{\mathrm{rank}}}
\begin{document}
\title{Semidefinite tests for latent causal structures}
\date{\today}

\author{Aditya Kela}
\affiliation{Institute for Theoretical Physics, University of Cologne, 50937 Cologne, Germany}
\author{Kai von Prillwitz}
\affiliation{Institute for Chemistry and Biology of the Marine Environment, University of Oldenburg, 26111 Oldenburg, Germany}
\author{Johan {\AA}berg}
\email{johan.aberg@uni-koeln.de}
\affiliation{Institute for Theoretical Physics, University of Cologne, 50937 Cologne, Germany}
\author{Rafael Chaves}
\affiliation{International Institute of Physics, Federal University of Rio Grande do Norte, 59070-405 Natal, Brazil}
\author{David Gross}
\affiliation{Institute for Theoretical Physics, University of Cologne, 50937 Cologne, Germany}
\affiliation{Centre for Engineered Quantum Systems, School of Physics,
The University of Sydney, Sydney, NSW 2006, Australia}

\begin{abstract}
Testing whether a probability distribution is compatible with a given Bayesian network is a fundamental task in the field of causal inference, where Bayesian networks model causal relations.
Here we consider the class of causal structures where all correlations between observed quantities are solely due to the influence from latent variables. 
We show that each model of this type imposes a certain signature on the observable covariance matrix in terms of a particular decomposition into positive semidefinite components. 
This signature, and thus the underlying hypothetical latent structure, can be tested in a computationally efficient manner via semidefinite programming.
This stands in stark contrast with the algebraic geometric tools required if the full observable probability distribution is taken into account. 
The semidefinite test is compared with tests based on entropic inequalities.
\end{abstract}

\maketitle

\section{Introduction}

In spite of the primal importance of discovering causal relations in science, the statistical analysis of empirical data has historically shied away from causality. 
Only releatively recently has a rigorous theory of causality emerged
(see, for instance, \cite{Pearlbook,Spirtesbook}), showing that empirical data indeed can contain information about causation rather than mere correlation. Since then, causal inference has quickly become influential. Examples range from applications to the inference of genetic \cite{friedman2004inferring} and social networks \cite{Steeg2011}, to a better understanding of the role of causality within quantum physics \cite{Leifer2013,Fritz2012,Fritz2014,Henson2014,Chaves2015a,Piennar2014,ried2015quantum,Costa2016,horsman2016can}.

To formalize causal mechanisms it has become popular to use directed acyclic graphs (DAGs) where nodes denote random variables and directed edges (arrows) account for their causal relations. 
Central problems within this context include \emph{inference} or \emph{model selection}: `Given samples from a number of observable variables, which DAG should we associate with them?', as well as \emph{hypothesis testing}: `Can the observed data be explained in terms of an assumed DAG?'
Here, we concentrate on the latter problem and 
propose a novel solution based on the covariances that a given causal structure  gives rise to. To understand the relevance and applicability of this method it is useful to summarize the difficulties that we typically face when approaching such problems.

The  most  common  method  to  infer  the  set  of  possible DAGs  compatible with empirical observations is based  on the Markov condition and the faithfulness assumption \cite{Pearlbook,Spirtesbook}. Under these conditions, and in the case where all variables composing a given DAG can be assumed to be empirically accessible, the conditional statistical independencies implied by the graph contain all the information required to test for the compatibility of some data with the causal structure. However, for a variety of practical and fundamental reasons, we do quite generally face causal discovery in the presence of latent (hidden) variables, that is, variables that may play an important role in the causal model, but nonetheless cannot be accessed empirically. In this case we have to characterize the set of marginal probability distributions that a given DAG can give rise to. Unfortunately, as is widely recognized, generic causal models with latent variables impose highly non-trivial constraints on the possible correlations compatible with it \cite{Pitowsky1991,Pearl1995,Geiger1999,Bonet2001,Garcia2005,Kang2006,Kang2007,evans2012graphical,lee2015causal,Chaves2016,Rosset2016,wolfe2016inflation}. Although the marginal compatibility in principle can be completely characterized in terms of semi-algebraic sets \cite{Geiger1999}, it appears that the resulting tests in practice are computationally intractable beyond a few variables \cite{Garcia2005,lee2015causal}.

One possible approach to deal with the apparent intractability is to consider relaxations of the original problem, that is, to design tests that define incomplete lists of constraints (outer approximations) to the set of compatible distributions \cite{Bonet2001,Garcia2005,Kang2006,Kang2007,moritz2014discriminating,Chaves2014,Chaves2014b}. For instance, this approach has previously been considered in \cite{Chaves2014,Chaves2014b,steudel2015information,weilenmann2016non}, with tests based on entropic information theoretic inequalities; an idea originally conceived to tackle foundational questions in quantum mechanics \cite{Braunstein1988,Cerf1997,Chaves2012,FritzChaves2013,Chaves2013entropic,Chaves2015entropy,Chaves2016entropic}. Here we consider a relaxation in a similar spirit, but based on covariances rather than entropies. 

Beyond dealing with potential computational intractabilities, an additional benefit with a relaxation based on covariances is that it at most involves bipartite marginals, and it seems reasonable to expect that this would be less data-intensive than methods based on the full multivariate distribution of the observables.

\begin{figure}[h!]
 \includegraphics[width= 11cm]{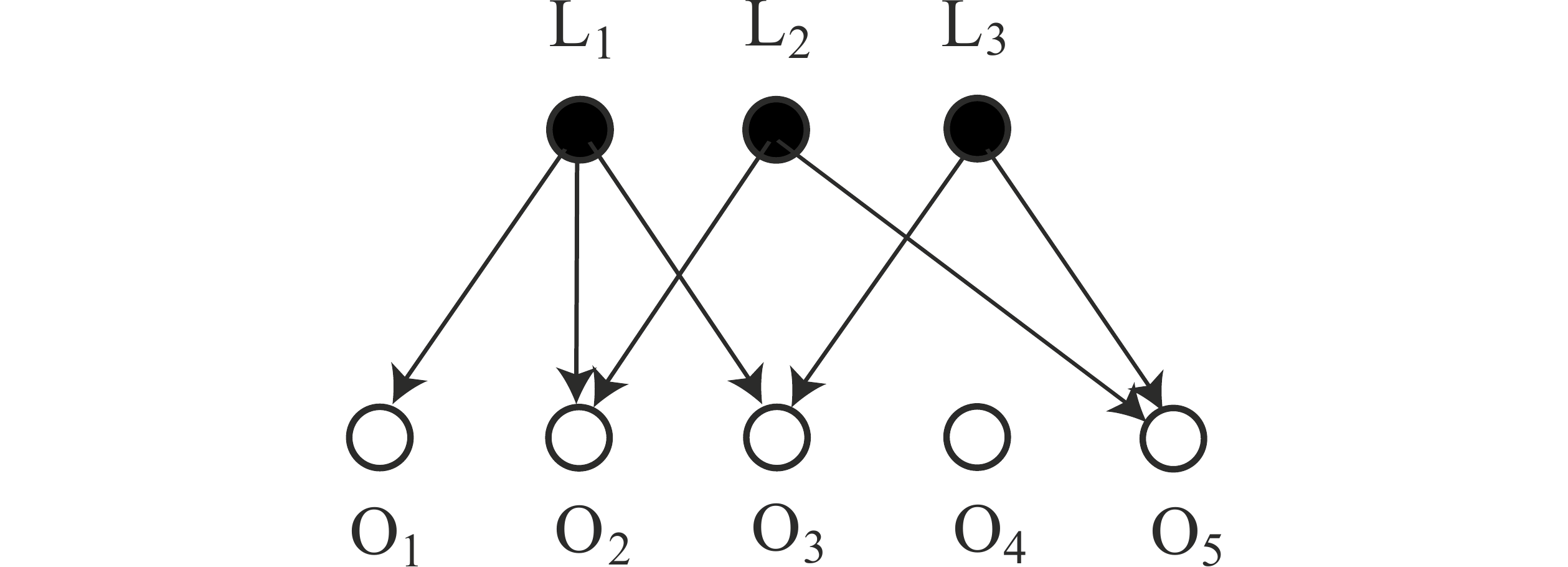} 
\caption{\label{FigBipartite} {\bf Bipartite DAGs.}  In this investigation we focus on the class of causal models where all correlations among the observables are due to a collection of independent latent variables. This setting can be described in terms of DAGs that are bipartite, where the latter means that all edges are directed from latent variables ($L_1,L_2,L_3$) to the observables ($O_1,O_2,O_3,O_4,O_5$), and where there are no edges within each of these subsets. 
}
\end{figure}

\subsection{Main assumptions and results}

We focus on a particular class of latent causal structures, where we assume that there are no direct causal influences between the observables, but only from latent variables to observables (see figure \ref{FigBipartite}). Hence, all correlations among the observables are due to the latent variables. This setting can be described by the class of DAGs where all edges are directed from latent vertices to observable vertices, but no edges within these two groups (see figure \ref{FigBipartite}). In other words, we consider the case of DAGs that are bipartite, with the coloring `observable' and `latent'.
Alternatively, this can be described in terms of hypergraphs, where each independent latent cause is associated with a hyperedge consisting of the affected observable vertices (see e.g.~\cite{evans2015graphs}).

This class of graphs has previously been considered in the context of marginalization of Bayesian networks \cite{moritz2014discriminating,steudel2015information,evans2015graphs}. They moreover provide examples of the difficulties that arise when characterizing latent structures \cite{Branciard2010,Fritz2012,Branciard2012,tavakoli2014nonlocal,Chaves2014,Chaves2014b}, where standard techniques based on the use of conditional independencies even can yield erroneous results (for a discussion, see e.g.~\cite{Spekkens2015}). 
 This type of latent structures furthermore emerges in the context of Bell's theorem \cite{Bell1964}, as well as in recent generalizations \cite{Branciard2010,Fritz2012,Branciard2012,tavakoli2014nonlocal,Chaves2016,Rosset2016,saunders2016experimental,carvacho2016experimental}, where they can be used to show that quantum  correlations between distant observers --thus without direct causal influences between them-- are incompatible with our most basic notions of cause and effect.

Irrespective of the nature of the observables (categorical or continuous) we are free to assign vectors to each possible outcome of the observables.  Our main result is to show that each bipartite DAG implies a particular decomposition of the resulting covariance matrix into positive semidefinite components. Hence, we can test whether the observed covariance matrix is compatible with a hypothetical bipartite DAG by checking whether it satisfies the corresponding positive semidefinite decomposition, and we will in the following somewhat colloquially refer to this as the `semidefinite test'.  
The semidefinite test can thus be phrased as a semidefinite membership problem, which in turn can be solved via semidefinite programming. The latter is known to be computationally efficient from a theoretical point of view, and has a good track record concerning algorithms that are efficient also in practice (see discussions in \cite{vandenberghe1996semidefinite}). 

\subsection{Structure of the paper} 
In section \ref{SecSemidefDec} we derive a general decomposition of covariance matrices, which forms the basis of our semidefinite test. In section \ref{SecObsLat} we rephrase this general result to fit with the particular structure of observables and latent variables that we employ, and in section \ref{SecDecBipartDAGs} we derive the main result, namely that every bipartite DAG implies a particular semidefinite decomposition of the observable covariance matrix. Section \ref{SecConverse} focuses on the converse, namely that every covariance matrix that satisfies the decomposition of a given bipartite DAG can be realized by a corresponding causal model. Section \ref{SecOperatorInequalities} relates the semidefinite decomposition to previous types of operator inequalities introduced in  \cite{VonPrillwitz15MasterThesis}.
To obtain a covariance matrix we may be required to assign vectors to the outcomes of the random variables, and section \ref{SecUniversalFeatureMaps} discusses the dependence of the semidefinite test on this assignment. 
In section \ref{SecMonotonicity} we briefly discuss the fact that the compatibility with a given bipartite DAG is not affected if the observables are processed locally, and that the semidefinite test respects this basic property under suitable conditions. 
Section \ref{SecMonotoneFamily} considers a specific class of distributions where it is possible to analytically determine the conditions for a semidefinite decomposition.  This class of distribution does in section  \ref{SecComparison} serve as a testbed for comparisons with the above mentioned entropic tests. We conclude with a summary and outlook in section \ref{SecSummaryOutlook}.

\section{\label{SecSemidefDec}Semidefinite decomposition of covariance matrices}
In this section we develop the basic structure that forms the core of the semidefinite test. In essence it is obtained via a repeated application of a law of total variance for covariance matrices.

For a vector-valued random variable $Y$, in a real or complex inner product space $\mathcal{V}$, we define the covariance matrix of $Y$ as 
\begin{equation}
\mathrm{Cov}(Y) := E\Big(\big(Y- E(Y)\big)  \big(Y- E(Y)\big)^{\dagger}\Big)  = E(YY^{\dagger})-E(Y)E(Y)^{\dagger},
\end{equation}
 where $E(Y)$ denotes the expectation of $Y$ and $\dagger$ denotes the transposition if the underlying vector space is real, and the Hermitian conjugation if the space is complex. 
One should note that $E(Y)^{\dagger} = E(Y^{\dagger})$.   We also define the cross-correlation for a pair of vector-valued variables $Y',Y$ (not necessarily belonging to the same vector space)
  \begin{equation}
  \label{crosscorrelation}
 \mathrm{Cov}(Y',Y) := E(Y'Y^{\dagger})-E(Y')E(Y)^{\dagger},
\end{equation}
where $\mathrm{Cov}(Y,Y) = \mathrm{Cov}(Y)$.
 For a pair of random variables $X,Y$ we denote the expectation of $Y$ conditioned on $X$ as $E(Y|X)$.
Via the conditional expectation we can also define the conditional covariance matrix 
\begin{equation}
\begin{split}
\mathrm{Cov}(Y|X) := & E\Big(\big(Y- E(Y|X)\big)  \big(Y- E(Y|X)\big)^{\dagger} \Big|X \Big)   =  E(YY^{\dagger}|X)-E(Y|X)E(Y|X)^{\dagger}.
\end{split}
\end{equation}
In a similar manner we can also obtain a conditional cross-correlation between two random vectors $Y',Y$
\begin{equation}
\begin{split}
  \mathrm{Cov}(Y',Y|X) := & E\Big(\big(Y'- E(Y'|X)\big)  \big(Y- E(Y|X)\big)^{\dagger} \Big|X \Big)   =  E(Y'Y^{\dagger}|X)-E(Y'|X)E(Y|X)^{\dagger}.
\end{split}
\end{equation}

The starting point for our derivations is the law of total expectation
\begin{equation}
\label{lawoftotalexpectation}
E(Y) = E\big(E(Y|X)\big),
\end{equation}
 where the `outer' expectation corresponds to the averaging over the random variable $E(Y|X)$. The law of total expectation can be iterated, such that for three random variables $Y,X,Z$, we have a law of total conditional expectation
\begin{equation}
\label{lawofconditionaltotalexpectation}
E(Y|Z) = E\Big(E(Y|X,Z)\Big|Z\Big),
\end{equation}
and thus $E(Y) = E\big(E(Y|Z)\big) = E\Big(E\big(E(Y|X,Z)\big|Z\big)\Big)$.

From the law of total expectation (\ref{lawoftotalexpectation}) one can obtain a covariance-matrix version of the law of total variance
\begin{equation}
\label{lawoftotalcovariance}
\mathrm{Cov}(Y)  = \mathrm{Cov}\big( E(Y|Z) \big) +  E\big( \mathrm{Cov}(Y|Z) \big),
\end{equation}
which can be confirmed by expanding the two sides of the above equality and applying (\ref{lawoftotalexpectation}).

For three random variables $Y,W,Z$ a conditional version of the law of total covariance reads
\begin{equation}
\label{lawofconditionaltotalcovariance}
\begin{split}
\mathrm{Cov}(Y|Z)  = \mathrm{Cov}\Big( E(Y|W,Z)\Big|Z\Big) +  E\Big( \mathrm{Cov}(Y|W,Z)\Big|Z\Big),
\end{split}
\end{equation}
which can be obtained by expanding the right hand side and applying the law of total conditional expectation (\ref{lawofconditionaltotalexpectation}).

The following lemma is obtained via an iterated application of the law of total covariance (\ref{lawoftotalcovariance}) and the law of  total conditional covariance (\ref{lawofconditionaltotalcovariance}). One may note the similarities with the chain-rule for entropies (see e.g.~chapter 2 in \cite{cover2012elements}).
\begin{Lemma}
\label{ChainDecomposition}
Let $Y$ be a vector-valued random variable on a finite-dimensional real or complex inner product space $\mathcal{V}$, let  $X_1,\ldots, X_N$ be random variables over the same probability space. Assuming that the underlying measure is such that all involved conditional expectations and covariances are well defined, then 
\begin{equation}
\label{zioizu}
\mathrm{Cov}(Y) = R + \sum_{n=1}^{N}C_n, 
\end{equation}
where  $R$ and $C_1,\ldots, C_N$ are positive semidefinite operators on the space $\mathcal{V}$, defined by
\begin{equation}
\label{lldavaldl}
\begin{split}
C_{1} :=  & \mathrm{Cov}\big( E(Y|X_1) \big),\\
C_{n} := & E\Big(\mathrm{Cov}\big( E(Y|X_1,\ldots,X_n)\big|X_1,\ldots,X_{n-1}\big)\Big),\quad n = 2,\ldots,N,\\
R  :=  & E\big(\mathrm{Cov}(Y|X_1,\ldots,X_N)\big).
\end{split}
\end{equation}
\end{Lemma}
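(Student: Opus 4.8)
The plan is to prove the identity by induction on the number $N$ of conditioning variables, peeling off one variable at a time by means of the (conditional) law of total covariance, and then to check positive semidefiniteness of each resulting term separately. Throughout, the key device is to treat a tuple $(X_1,\ldots,X_k)$ as a single random variable, so that the two- and three-variable statements \eqref{lawoftotalcovariance} and \eqref{lawofconditionaltotalcovariance} apply verbatim.

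For the base case I would apply the law of total covariance \eqref{lawoftotalcovariance} with $Z=X_1$, obtaining $\mathrm{Cov}(Y) = \mathrm{Cov}\big(E(Y|X_1)\big) + E\big(\mathrm{Cov}(Y|X_1)\big) = C_1 + E\big(\mathrm{Cov}(Y|X_1)\big)$. The induction hypothesis to carry forward is that, for some $k$ with $1\le k<N$,
\begin{equation}
\mathrm{Cov}(Y) = \sum_{n=1}^{k} C_n + E\big(\mathrm{Cov}(Y|X_1,\ldots,X_k)\big).
\end{equation}
For the inductive step I would apply the conditional law of total covariance \eqref{lawofconditionaltotalcovariance} with $Z=(X_1,\ldots,X_k)$ and $W=X_{k+1}$, giving
\begin{equation}
\mathrm{Cov}(Y|X_1,\ldots,X_k) = \mathrm{Cov}\big(E(Y|X_1,\ldots,X_{k+1})\big|X_1,\ldots,X_k\big) + E\big(\mathrm{Cov}(Y|X_1,\ldots,X_{k+1})\big|X_1,\ldots,X_k\big),
\end{equation}
and then take the outer expectation $E(\cdot)$ of both sides. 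The first term becomes $C_{k+1}$ by definition \eqref{lldavaldl}, and the second becomes $E\big(\mathrm{Cov}(Y|X_1,\ldots,X_{k+1})\big)$ after collapsing the nested conditioning via the tower property \eqref{lawoftotalexpectation} applied with conditioning variable $(X_1,\ldots,X_k)$. Substituting back advances the hypothesis from $k$ to $k+1$; iterating up to $k=N$ and setting $R := E\big(\mathrm{Cov}(Y|X_1,\ldots,X_N)\big)$ yields \eqref{zioizu}. (One may also absorb the base case into the general pattern by regarding conditioning on the empty tuple as trivial.)

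For positive semidefiniteness I would use the elementary identity $v^\dagger \mathrm{Cov}(Z) v = E\big(|v^\dagger(Z-E(Z))|^2\big)\ge 0$, valid for any vector-valued random variable $Z$ on $\mathcal{V}$ and any $v\in\mathcal{V}$, which shows that every covariance matrix is positive semidefinite; applied under a conditioning event it shows that $\mathrm{Cov}(Z|X)$ is positive semidefinite for (almost) every value of $X$. Hence $C_1=\mathrm{Cov}\big(E(Y|X_1)\big)$ is positive semidefinite directly, whereas each $C_n$ ($n\ge 2$) and $R$ are expectations over the conditioning variables of matrices that are positive semidefinite pointwise, and positive semidefiniteness survives taking expectations since $v^\dagger E(M) v = E(v^\dagger M v)\ge 0$.

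I do not expect a genuine obstacle: once the bundling of $(X_1,\ldots,X_n)$ into a single variable is in place, the argument is essentially bookkeeping. The only points requiring mild care are (i) confirming that all conditional expectations and covariances produced along the iteration are well defined and integrable — precisely the standing assumption of the lemma — and (ii) ensuring that the nested conditional expectation in the second term of the inductive step is collapsed with the correct instance of the tower property.
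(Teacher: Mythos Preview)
Your proposal is correct and follows essentially the same approach as the paper: induction on the number of conditioning variables, peeling off one $X_{k+1}$ at a time via the conditional law of total covariance \eqref{lawofconditionaltotalcovariance}, with positive semidefiniteness established by noting that each term is an expectation of pointwise positive semidefinite conditional covariances. The only cosmetic difference is that the paper starts its induction at $j=2$ (combining \eqref{lawoftotalcovariance} and one instance of \eqref{lawofconditionaltotalcovariance} to get there), whereas you start cleanly at $k=1$; your explicit invocation of the tower property to collapse the nested expectation is a point the paper leaves implicit.
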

One may note that the above decomposition is not necessarily unique; we could potentially obtain a new decomposition if  the variables in the sequence $X_1,\ldots, X_N$ are permuted.
\begin{proof}
The law of total covariance (\ref{lawoftotalcovariance}) for $Z=X_1$, combined with the law of total conditional covariance (\ref{lawofconditionaltotalcovariance}) for $Z:= X_1, W:= X_2$ yields 
\begin{equation}
\label{dnjvadlkv}
\begin{split}
\mathrm{Cov}(Y)  = & \mathrm{Cov}\big( E(Y|X_1) \big)  +  E\Big(  \mathrm{Cov}\big( E(Y|X_2,X_1)\big|X_1\big)    \Big) + E\big( \mathrm{Cov}(Y|X_2,X_1)\big). 
\end{split}
\end{equation}

Suppose that for some $j\geq 2$ it would be true that
\begin{equation}
\label{kjdfvlkad}
\begin{split}
\mathrm{Cov}(Y)  
 = & \mathrm{Cov}\big( E(Y|X_1) \big) \\
& +\sum_{n=2}^{j} E\bigg(\mathrm{Cov}\Big( E(Y|X_1,\ldots,X_n)\Big|X_1,\ldots,X_{n-1} \Big)\bigg)\\
& +  E\big( \mathrm{Cov}(Y|X_1,\ldots,X_j) \big).
\end{split}
\end{equation}
The law of  total conditional covariance (\ref{lawofconditionaltotalcovariance}), with $W:= X_{j+1}$ and $Z := X_1,\ldots, X_{j}$, gives
\begin{equation*}
\begin{split}
\mathrm{Cov}(Y|X_1,\ldots, X_j)  = & \mathrm{Cov}\Big( E(Y|X_1,\ldots, X_j,X_{j+1})\Big| X_1,\ldots, X_j\Big)  +  E\Big( \mathrm{Cov}(Y|X_1,\ldots, X_j,X_{j+1})\Big|X_1,\ldots, X_j\Big).
\end{split}
 \end{equation*}
  By inserting this expression into the last line of (\ref{kjdfvlkad}) one does again obtain (\ref{kjdfvlkad}) but with $j$ substituted for $j+1$. By (\ref{dnjvadlkv}) we can see that (\ref{kjdfvlkad}) is true for $j = 2$. Thus, by induction to $j= N$, and the identifications in (\ref{lldavaldl}), we obtain (\ref{zioizu}).

 Note that  $\mathrm{Cov}\big( E(Y|X_1,\ldots,X_{n-1},X_n)\big|X_1 = x_1,\ldots,X_{n-1}=x_{n-1}\big)$ is a positive semidefinite operator on $\mathcal{V}$ for each value of $x_1,\ldots,x_{n-1}$. Hence,  by averaging over these  variables, and thus implementing the expectation that yields $C_n$, we do still have a positive semidefinite operator on $\mathcal{V}$. The same observation applies to $ R = E\big( \mathrm{Cov}(Y|X_1,\ldots,X_N) \big)$.
\end{proof}

\section{\label{SecObsLat}Observable vs.\ latent variables, and feature maps}

Here we consider the decomposition developed in the previous section for the more specific setting of observable and latent variables.

We consider a collection of observable variables $O_1,\ldots,O_M$. To each of these variables $O_{m}$ we associate a mapping $Y^{(m)}$, in some contexts referred to as a `feature map' \cite{ScholkopfLearningWithKernels}, into a finite-dimensional vector space $\mathcal{V}_m$. We denote the resulting vector-valued random variables by $Y_{m}:=Y^{(m)}(O_m)$, and for the sake of simplicity we will in the following tend to abuse the terminology and refer to the vectors $Y_m$ themselves as feature maps. 
We also define the joint random vector  $Y := \sum_{m=1}^{M}Y_{m}$ on $\mathcal{V} := \bigoplus_{m=1}^{M}\mathcal{V}_m$. (Hence, we can view $Y$ as the concatenation of the vectors $Y_m$.)
One should note that while we regard the observable variables  $O_m$ as being part of the setup that is `given', the feature maps $Y^{(m)}$ are part of the analysis, and we are free to assign these as we see fit. (Concerning the question of how the test depends on this choice, see section \ref{SecUniversalFeatureMaps}.)

\begin{figure}[h!]
 \includegraphics[width= 11cm]{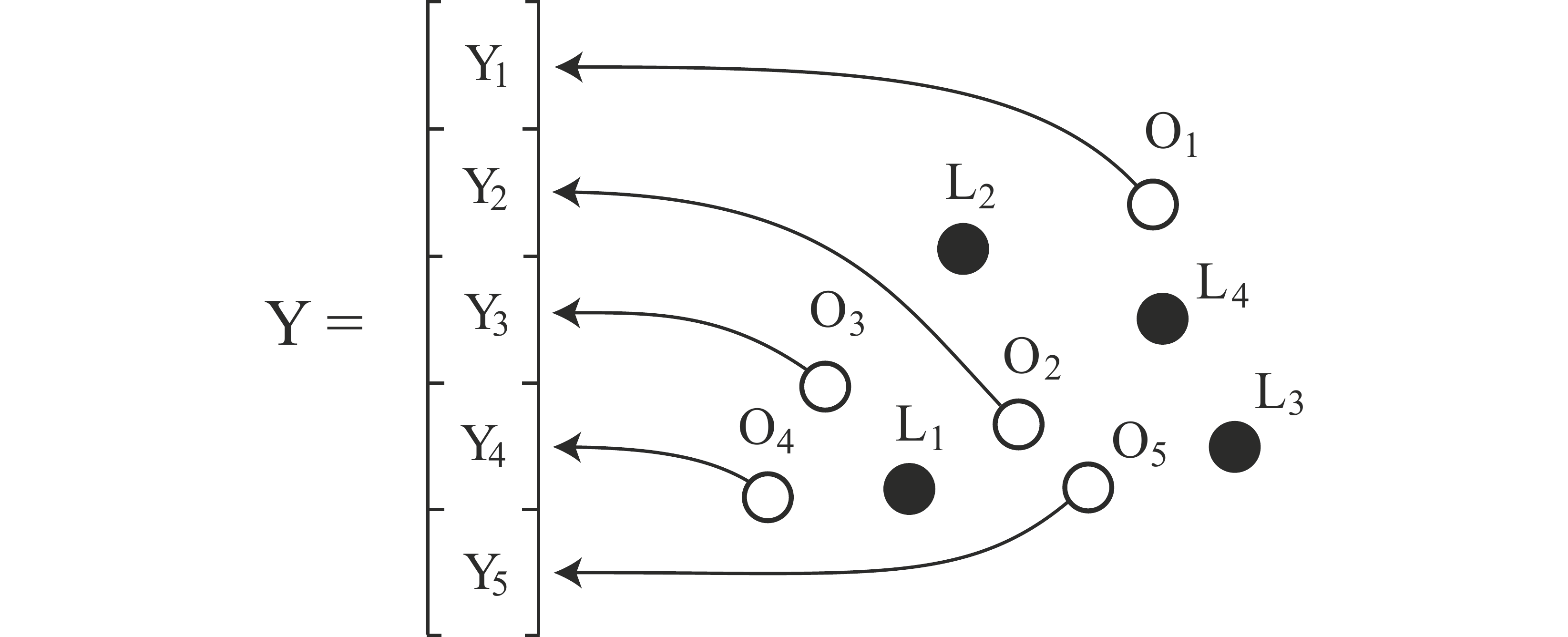} 
\caption{\label{FigObservableUnobservable} {\bf Observables, latent variables, and feature maps.} The model consists of a collection of observable variables $O_1,\ldots, O_M$ and a collection of latent variables $L_1,\ldots, L_N$. Via feature maps, each $O_m$ is mapped to a vector $Y_m$ in a vector space $\mathcal{V}_m$. On the vector space $\mathcal{V} = \bigoplus_{m=1}^{M}\mathcal{V}_m$ we define the joint random vector $Y := Y_1+\cdots + Y_M$.}
\end{figure}

Let $P_m$ denote the projector onto the subspace $\mathcal{V}_m$ in $\mathcal{V}$.
We divide the total covariance matrix $\mathrm{Cov}(Y)$ into the cross-correlations between the separate observable quantities
$\mathrm{Cov}(Y) = [\mathrm{Cov}(Y_{m},Y_{m'})]_{m,m'=1}^{M}$. One can note that $\mathrm{Cov}(Y_{m},Y_{m'}) = P_m\mathrm{Cov}(Y)P_{m'}$.

For a collection of latent variables $L_1,\ldots,L_N$, we make the identifications $X_j:=L_j$ in Lemma \ref{ChainDecomposition}.
Similarly as for the covariance matrix we decompose the operators  $C_n$ and $R$ into `block-matrices'
$C_n = [C_{n}^{m,m'}]_{m,m' = 1}^{M}$ and $R = [R^{m,m'}]_{m,m'=1}^{M}$, with $C_n^{m,m'} := P_m C_n P_{m'}$ and $R^{m,m'} := P_m R P_{m'}$, where we can write
\begin{equation}
\label{BlockForm}
\begin{split}
C_{1}^{m,m'}  = & \mathrm{Cov}\big( E(Y_m|L_1),\, E(Y_{m'}|L_1)  \big),\\
C_{n}^{m,m'}   = & E\bigg(\mathrm{Cov}\Big( E(Y_m|L_1,\ldots,L_n),\, E(Y_{m'}|L_1,\ldots,L_n)  \Big|L_1,\ldots,L_{n-1}\Big)\bigg),\\
R^{m,m'}    = & E\big(\mathrm{Cov}(Y_m,\, Y_{m'}|L_1,\ldots,L_N)\big),
\end{split}
\end{equation}
for $2\leq n\leq N$.
In terms of these blocks we can thus reformulate (\ref{zioizu}) as
\begin{equation}
\mathrm{Cov}(Y_{m},Y_{m'}) = R^{m,m'} + \sum_{n=1}^{N}C_n^{m,m'}.
\end{equation}
One should keep in mind that $C_n^{m,m'}$ and $R^{m,m'}$ in the general case are matrices (rather than scalar numbers) for each single pair $m,m'$.

\section{\label{SecDecBipartDAGs}Decomposition of the covariance matrix for bipartite DAGs}

We define a bipartite DAG as a finite DAG $G = (V,E)$ with vertices $V$ and edges $E$, with a bipartition $V = O\cup L$,  $O\cap L = \emptyset$ such that all edges in $E$ are directed from the elements in $L$ (the latent variables) to the elements in  $O$ (the observables). Since $G$ is finite, we enumerate the elements of $O$ as  $O_1,\ldots,O_M$ and the elements of $L$ as $L_1,\ldots, L_N$. One may note that we generally will overload the notation and let $O_m$ and $L_n$ denote the vertices in the underlying bipartite DAG, as well as denoting the random variables associated with these vertices.

For a vertex $v$ in a directed graph $G$ we let $\mathrm{ch}(v)$ denote the children of $v$, i.e., the set of vertices $v'$ for which there is an edge directed from $v$ to $v'$. We let $\mathrm{pa}(v)$ denote the parents of $v$, i.e., the set of vertices $v'$ for which there is an edge directed from $v'$ to $v$.
For bipartite DAGs an element in $L$ can only have children in $O$ (and have no parents), and an element in $O$ can only have parents in $L$ (and no children). As an example, for the bipartite DAG in figure \ref{FigBipartite} we have $\mathrm{ch}(L_1) = \{O_1,O_2,O_3\}$, $\mathrm{ch}(L_2) = \{O_2,O_5\}$, and $\mathrm{ch}(L_3) = \{O_3,O_5\}$, and $\mathrm{pa}(O_1) = \{L_1\}$, $\mathrm{pa}(O_2) = \{L_1,L_2\}$, $\mathrm{pa}(O_{3}) = \{L_1,L_3\}$, $\mathrm{pa}(O_{4}) = \emptyset$, and $\mathrm{pa}(O_5) = \{L_2,L_3\}$. 

For a causal model defined by a general DAG  $G = (V,E)$ the underlying probability distribution can be described via the Markov condition where each edge represents a direct causal influence, and thus each vertex $v$ can only be  directly influenced by its parents $\mathrm{pa}(v)$, resulting in distributions of the form $P = \Pi_{v\in V}P\big(v\big|\mathrm{pa}(v)\big)$. Hence, for a bipartite DAG  we get $P = \Pi_{m}P\big(O_m\big|\mathrm{pa}(O_m)\big)\Pi_n P(L_n)$, and thus all the latent variables are independent, and the observables are independent when conditioned on the latent variables.

As in the previous section, we map the observables $O_1,\ldots,O_M$ to vectors $Y_1,\ldots,Y_M$ in vector spaces $\mathcal{V}_1,\ldots,\mathcal{V}_M$. 
For each $n$ we define the projector $P^{(n)}$ in $\mathcal{V}$ by
\begin{equation}
\label{Pdef}
P^{(n)} := \sum_{m\in \mathrm{ch}(L_n)}P_{m}.
\end{equation}
Hence, $P^{(n)}$ is the projector onto all subspaces of $\mathcal{V}$ that are associated with the children $\mathrm{ch}(L_n)$ of the latent variable $L_n$. (In the above sum we should strictly speaking write $\sum_{m:O_m\in \mathrm{ch}(L_n)}$. However, in order to  avoid a too cumbersome notation we will from time to time take the liberty of writing $m\in \mathrm{ch}(L_n)$ rather than $O_m\in \mathrm{ch}(L_n)$, and $n\in \mathrm{pa}(O_m)$ rather than $L_n\in \mathrm{pa}(O_m)$.)

\begin{figure}[h!]
 \includegraphics[height= 3.5cm]{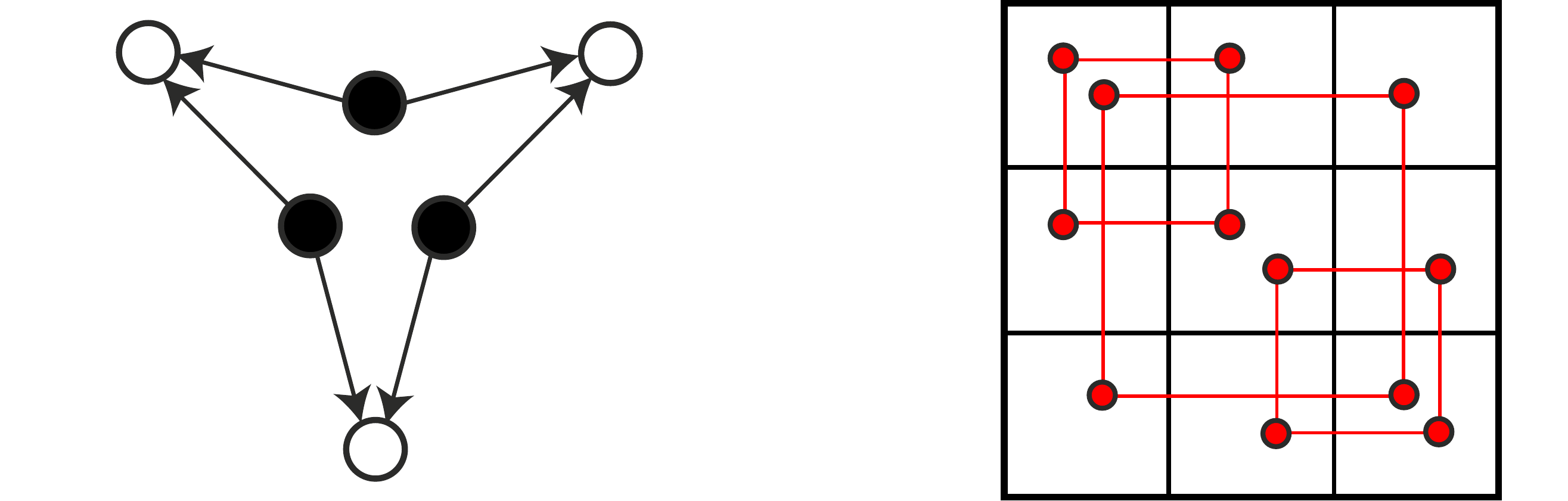} 
\caption{\label{FigTriangle} {\bf Example: Triangular bipartite DAG.}
The covariance matrix resulting from the observables in a bipartite DAG is subject to a decomposition where each latent variable gives rise to a positive semidefinite component, and where the support of that component is determined by the children of the corresponding latent variable. In the case of the `triangular' scenario of the the bipartite DAG to the left, each of the three latent variables has two children. The covariance matrix, schematically depicted to the right,  can consequently be decomposed into three positive semidefinite components, each with bipartite supports.  This observation yields a method  (which we refer to as the `semidefinite test') to falsify a given bipartite DAG as an explanation of an observed covariance matrix.
}
\end{figure}

\begin{Proposition}
\label{PropDecomposition}
For a bipartite DAG with latent variables $L_1,\ldots, L_N$ and observables $O_1,\ldots, O_M$  with assigned feature maps $Y_1,\ldots,Y_M$ into finite-dimensional real or complex inner-product spaces $\mathcal{V}_1,\ldots,\mathcal{V}_M$, the covariance matrix of $Y = \sum_{m=1}^{M}Y_m$ satisfies 
\begin{equation}
\label{poeto}
\mathrm{Cov}(Y) = R  + \sum_{n=1}^{N}C_n,\quad R\geq 0,\quad C_n\geq 0,
\end{equation}
where
\begin{equation}
\label{acdsjd}
P^{(n)}C_n P^{(n)} = C_n,\quad R = \sum_{m=1}^{M}P_m R P_m.
\end{equation}
and where the projectors $P^{(n)}$ are as defined in (\ref{Pdef}) with respect to the given bipartite DAG, and where $P_m$ is the projector onto $\mathcal{V}_m$ in $\bigoplus_{m=1}^{M}\mathcal{V}_m$. 
\end{Proposition}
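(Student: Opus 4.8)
The plan is to specialize Lemma~\ref{ChainDecomposition} to the identification $X_j := L_j$ and then extract the support statements~(\ref{acdsjd}) from the Markov structure of the bipartite DAG. The lemma immediately delivers $\mathrm{Cov}(Y) = R + \sum_{n=1}^{N} C_n$ with $R\geq 0$ and $C_n\geq 0$, so the only thing left is to prove~(\ref{acdsjd}). Passing to the block decomposition~(\ref{BlockForm}), the identity $R = \sum_m P_m R P_m$ is equivalent to $R^{m,m'}=0$ for all $m\neq m'$, and $P^{(n)}C_n P^{(n)} = C_n$ is equivalent to $C_n^{m,m'} = 0$ whenever $O_m\notin\mathrm{ch}(L_n)$ or $O_{m'}\notin\mathrm{ch}(L_n)$ (recall $P^{(n)}=\sum_{m\in\mathrm{ch}(L_n)}P_m$). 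Both reduce to statements about conditional expectations and covariances that follow from the factorization $P = \prod_m P\big(O_m\big|\mathrm{pa}(O_m)\big)\prod_n P(L_n)$.

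The key intermediate claim is that $E(Y_m\,|\,L_1,\ldots,L_n)$ is a measurable function of only those $L_j$ with $L_j\in\mathrm{pa}(O_m)$ and $j\le n$; in particular, if $O_m\notin\mathrm{ch}(L_n)$ then $E(Y_m\,|\,L_1,\ldots,L_n)=E(Y_m\,|\,L_1,\ldots,L_{n-1})$, with the convention that for $n=1$ this equals the constant $E(Y_m)$. This is essentially the local Markov property: since $O_m$ has no children, every $L_j$ is a non-descendant of $O_m$, and the mutual independence of the latent variables together with the factorization above gives $P(O_m\,|\,L_1,\ldots,L_n)=P\big(O_m\,|\,\{L_j\in\mathrm{pa}(O_m): j\le n\}\big)$. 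I expect making this reduction of the conditioning set fully rigorous — in particular checking the degenerate $n=1$ case and the case where some parent $L_j$ of $O_m$ has index $j>n$ — to be the step that requires the most care; everything else is bookkeeping.

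Granting the claim, the two support statements follow. For $n\ge 2$ and $O_m\notin\mathrm{ch}(L_n)$, the vector $E(Y_m\,|\,L_1,\ldots,L_n)$ is $\sigma(L_1,\ldots,L_{n-1})$-measurable, hence constant under the conditioning on $L_1,\ldots,L_{n-1}$ appearing in~(\ref{BlockForm}); a conditional cross-covariance with one deterministic argument vanishes, so $C_n^{m,m'}=0$, and by Hermiticity of $C_n$ the same holds when $O_{m'}\notin\mathrm{ch}(L_n)$. For $n=1$, if $O_m\notin\mathrm{ch}(L_1)$ then $E(Y_m\,|\,L_1)=E(Y_m)$ and $C_1^{m,m'}=\mathrm{Cov}\big(E(Y_m|L_1),E(Y_{m'}|L_1)\big)=0$. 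Summing the surviving blocks yields $C_n = \sum_{m,m'\in\mathrm{ch}(L_n)}P_mC_nP_{m'} = P^{(n)}C_nP^{(n)}$. Finally, the factorization shows that conditioned on $L_1,\ldots,L_N$ the observables are mutually independent, so for $m\neq m'$ we get $\mathrm{Cov}(Y_m,Y_{m'}\,|\,L_1,\ldots,L_N)=E(Y_mY_{m'}^{\dagger}|L_1,\ldots,L_N)-E(Y_m|L_1,\ldots,L_N)E(Y_{m'}|L_1,\ldots,L_N)^{\dagger}=0$, whence $R^{m,m'}=E\big(\mathrm{Cov}(Y_m,Y_{m'}|L_1,\ldots,L_N)\big)=0$ and $R=\sum_m P_m R P_m$. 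Together with the decomposition from Lemma~\ref{ChainDecomposition} this establishes~(\ref{poeto})--(\ref{acdsjd}).
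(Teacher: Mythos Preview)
Your proposal is correct and follows essentially the same approach as the paper: apply Lemma~\ref{ChainDecomposition} with $X_j:=L_j$, then use the Markov property to reduce $E(Y_m\,|\,L_1,\ldots,L_n)$ to $E(Y_m\,|\,L_1,\ldots,L_{n-1})$ when $L_n\notin\mathrm{pa}(O_m)$, conclude that the corresponding blocks $C_n^{m,m'}$ vanish, and finally use conditional independence of the observables given all latent variables to make $R$ block-diagonal. Your treatment is in fact slightly more explicit than the paper's, which does not separately spell out the $n=1$ case or the measurability argument for why the conditional cross-covariance with a $\sigma(L_1,\ldots,L_{n-1})$-measurable argument vanishes.
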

One may note that if the span of the supports of $\{P^{(n)}\}_{n=1}^{N}$ covers  $\mathcal{V}$, then we can distribute the blocks $P_m RP_m$ of $R$ and add them to the different $C_n$ in such a way that the new operators still are positive semidefinite and satisfy the support structure of the original $C_n$s. The exception is if there is some observable that has no parent (as $O_4$ in figure \ref{FigBipartite}).

\begin{proof}
Select an enumeration $L_{1},\ldots, L_N$ of the latent variables. By Lemma \ref{ChainDecomposition} we know that the covariance matrix $\textrm{Cov}(Y)$ can be decomposed as in (\ref{zioizu}) with the positive semidefinite operators $R$ and $C_n$ as defined in (\ref{lldavaldl}).
In the following we will make use of the block-decomposition $C_n= [C^{m,m'}_n]_{m,m'=1}^M$  and $R = [R^{m,m'}]_{m,m'=1}^M$ with respect to the subspaces $\mathcal{V}_1,\ldots,\mathcal{V}_M$ as in (\ref{BlockForm}).

If $L_n\notin \mathrm{pa}(O_{m})$ then it means that $Y_m$ is independent of $L_n$ and thus
\begin{equation*}
\begin{split}
E(Y_m|L_1,\ldots,L_n) = E(Y_m|L_1,\ldots,L_{n-1}).
\end{split}
\end{equation*}
The analogous statement is true if $L_{n}\notin \mathrm{pa}(O_{m'})$. 
By this it follows that
\begin{equation}
\label{madflbm}
\begin{split}
& \mathrm{Cov}\Big( E(Y_m|L_1,\ldots,L_n),\, E(Y_{m'}|L_1,\ldots,L_n)  \Big|L_1,\ldots,L_{n-1}\Big) = 0,\quad \mathrm{if}\quad L_n\notin \mathrm{pa}(O_{m})\cap\mathrm{pa}(O_{m'}).
\end{split}
\end{equation}

Note that 
$ L_n\in \mathrm{pa}(O_{m})\cap \mathrm{pa}(O_{m'})\,\,\Leftrightarrow\,\, O_m,O_{m'}\in \mathrm{ch}(L_n)$.
By comparing (\ref{madflbm}) with (\ref{BlockForm}) we can conclude that 
$C^{m,m'}_n = 0$ if $O_m\notin \mathrm{ch}(L_n)$ or $O_{m'}\notin \mathrm{ch}(L_n)$.
The definition of the projector $P^{(n)}$ in (\ref{Pdef}) thus yields 
$P^{(n)}C_nP^{(n)} = C_n$. Moreover, we know from Lemma \ref{ChainDecomposition} that $C_n\geq 0$.

By construction, all the observables $O_1,\ldots,O_M$ and thus also $Y_1,\ldots,Y_M$ are independent when conditioned on the latent variables. Hence, 
\begin{equation*}
R^{m,m'}  =  E\big(\mathrm{Cov}(Y_m,\,Y_{m'}|L_1,\ldots,L_N )\big)= \delta_{m,m'}E\big(\mathrm{Cov}(Y_m|L_1,\ldots,L_N)\big),
\end{equation*}
and thus $R = \sum_{m=1}^{M}P_m RP_m$.

One may note that although the operators $C_n$ potentially may change if we generated them via a permutation of the sequence of latent variables $L_1,\ldots,L_N$, the resulting projectors $P^{(n)}$ would not change. Hence, the support-structure described by (\ref{poeto}) and (\ref{acdsjd}) is stable under rearrangements of the sequence.
\end{proof}

Deciding whether a given matrix is of the form (\ref{poeto}) can be done via semi-definite programming (SDP).
We end this section by describing an explicit SDP formulation.

The optimization will be over matrices $Z$ which can be interpreted as the direct sum of candidates for $R$ and the $C_n$'s.
More precisely, let
\begin{eqnarray}
	\mathcal{Z} &:=& 
	\mathcal{V}_1 \oplus \dots \oplus \mathcal{V}_M \oplus 
	\mathcal{W}_1 \oplus \dots \oplus \mathcal{W}_N, \label{eqn:Zblocks} \\
	\mathcal{W}_i &:=& \bigoplus_{m\in\mathrm{ch}(L_p)} \mathcal{V}_m.
	\label{eqn:Wblocks}
\end{eqnarray}
Let $Z$ be a matrix on $\mathcal{Z}$. 
According to the direct sum decomposition (\ref{eqn:Zblocks}), the matrix $Z$ is a block matrix with $(M+N)\times (M+N)$ blocks.
We think of the fist $M$ diagonal blocks as carrying candidates for $R_m=P_m R P_m$ (which completely defines $R$, according to (\ref{acdsjd})); while the rear $N$ diagonal blocks correspond to candidate $C_n$'s.
Note that the $N$ rear sumands in (\ref{eqn:Zblocks}) are dirct sums themselves.
It therefore makes sense to use double indices to refer to spaces inside the $\mathcal{W}_i$'s.
Concretely, the SDP includes affine constraints on the blocks 
$Z^{(M+n,m),(M+n,m')}$. 
The first part 
of the indices selects the space $\mathcal{W}_n$ in (\ref{eqn:Zblocks}). 
The second part 
refers to the space $\mathcal{V}_m$ within $\mathcal{W}_{n}$ according to (\ref{eqn:Wblocks}).
We use the convention that $Z^{(M+n,m),(M+n,m')}$ denotes $0$ if either $\mathcal{V}_{m}$ or $\mathcal{V}_{m'}$ does not occur in $\mathcal{W}_n$. 

With these definitions, the semi-definite program that verifies whether a covariance matrix $\mathrm{Cov}(Y)$ is of the form (\ref{poeto}) reads
\begin{eqnarray}
	\text{maximize}\quad && 0  \label{eqn:primal}\\
	\text{subject to}\quad && 
	\delta_{m,m'}
	\sum_{m=1}^M Z^{(m),(m)}
	+
	\sum_{n=1}^N Z^{(M+n,m),(M+n,m')} 
	= \mathrm{Cov}(Y)^{m,m'}, \quad (m,m'=1, \dots M) \label{eqn:affineConstraints}\\
	&& Z \geq 0,
\end{eqnarray}
where the optimization is over symmetric (hermitian) matrices $Z$ on $\mathcal{Z}$.
Up to a trivial re-expression of the linear functions of $Z$ in terms of trace inner products with suitable matrices $F_i$, the optimization problem above is in the (dual) standard form of an SDP \cite[Section~3]{vandenberghe1996semidefinite}. 

The left-hand side of (\ref{eqn:affineConstraints}) impliclity defines a linear map $\mathcal{A}$ from matrices on $\mathcal{Z}$ to matrices on $\mathcal{V}$.
Explicitly, $\mathcal{A}$ maps off-diagonal blocks to $0$ and acts on block-diagonal matrices as
\begin{equation*}
	\mathcal{A}: 
	R_1 \oplus \dots \oplus R_M \oplus C_1 \oplus \dots \oplus C_N
	\mapsto
	\sum_m R_m + \sum_n C_n.
\end{equation*}
The constraints of the SDP can thus be written slightly more transparently as 
\begin{eqnarray}
	&&\mathcal{A}(Z) = \mathrm{Cov}(Y), \label{eqn:primalA} \\
	&&Z \geq 0
\end{eqnarray}

In this language, the dual of the above SDP is
\begin{eqnarray}
	\text{minimize}\quad && \mathrm{tr}\,\big(X\,\mathrm{Cov}(Y)\big) \label{eqn:dual} \\
	\text{subject to}\quad && 
	\mathcal{A}^\dagger(X) \geq 0.
\end{eqnarray}
Let $X^\star$ be the optimizer of (\ref{eqn:dual}).
If $\mathrm{tr}\big(X^\star\,\mathrm{Cov}(Y)\big)<0$, then the original SDP is infeasible and therefore, $\mathrm{Cov}(Y)$ is not of the form (\ref{poeto}).
Indeed, by construction, such an $X^\star$ has a negative trace inner product with the covariance matrix, but a positive trace inner product 
\begin{equation*}
	\mathrm{tr}\,\big(\mathcal{A}(Z) X\big)
	=
	\mathrm{tr}\,\big(Z \mathcal{A}^\dagger(X)\big)
	\geq 
	0
	\qquad
	\forall Z \geq 0
\end{equation*}
with all matrices $\mathcal{A}(Z), Z\geq 0$ that could potentially be feasible for the primal SDP (\ref{eqn:primalA}).
Thus, the dual SDP (\ref{eqn:dual}) can be used to find a \emph{witness} or a \emph{dual certificate} $X^\star$ for the incompatibility of a covariance matrix with a presumed causal structure.
The geometry of the involved objects is shown in figure~\ref{fig:witness}.
We will refer to this dual construction in section~\ref{SecSummaryOutlook}, where we sketch possibilities to base statistical hypothesis tests such witnesses.

\begin{figure}[h!]
 \includegraphics[width=8cm]{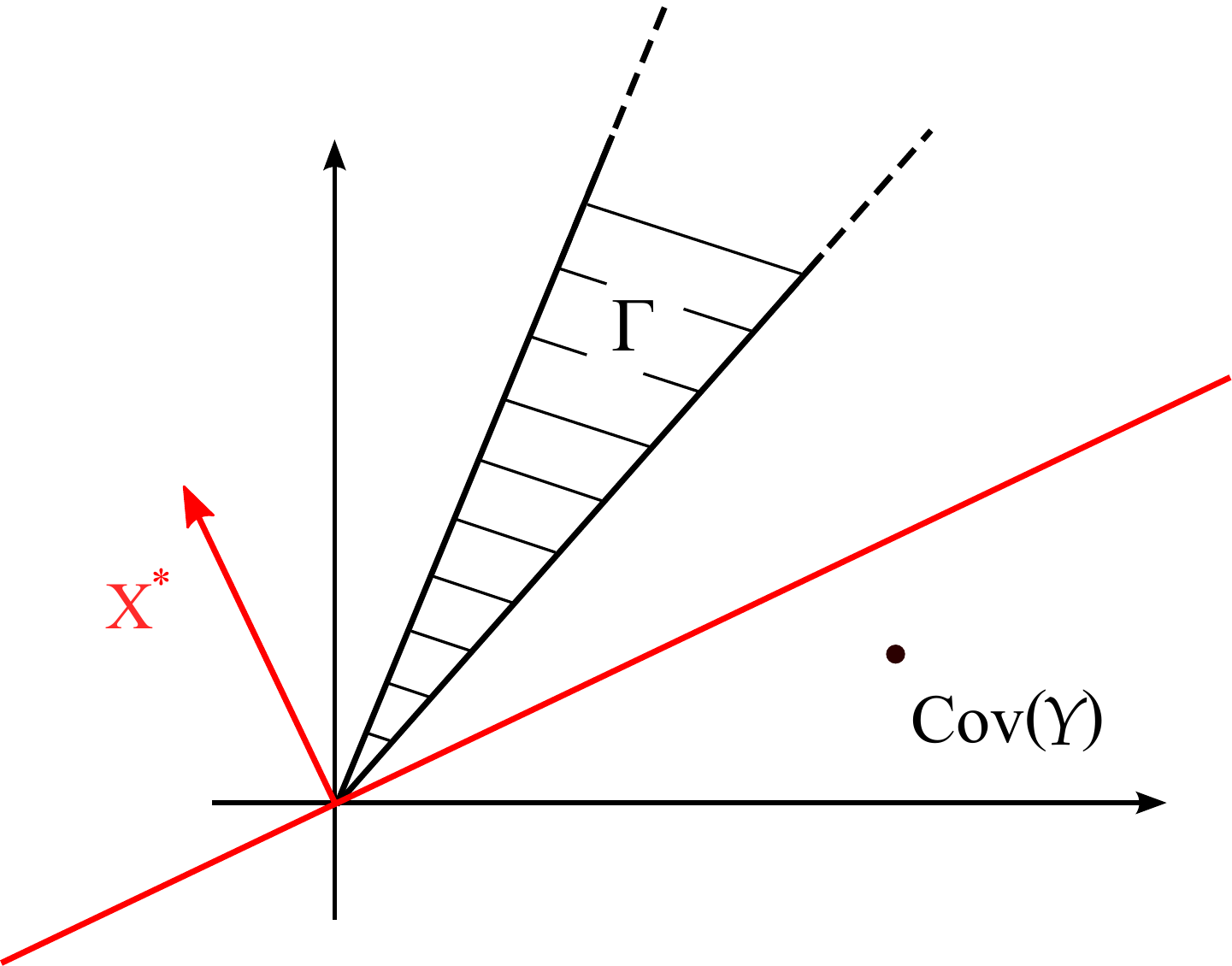} 
\caption{\label{fig:witness} {\bf Dual Certificates.}  
	The set of covariance matrices compatible with a certain causal structure in the sense of proposition~\ref{PropDecomposition} forms a convex cone $\Gamma$. 
	The cone is the feasible set of the SDP (\ref{eqn:primal}).
	If a given covariance matrix $\mathrm{Cov}(Y)$ is \emph{not} an element of that cone, then there exists a hyperplane (depicted in red) seperating the two convex sets.
	A normal vector $X^\star$ for the seperating hyperplane can be found using the dual SDP (\ref{eqn:dual}).
}
\end{figure}

\section{\label{SecConverse}Realizing a given decomposition}

In the previous section we have shown that the observable covariance matrix associated with a given bipartite DAG always satisfies a  particular semidefinite decomposition implied by that DAG. Here we show the converse, in the sense that if we have a positive semidefinite operator that satisfies the decomposition obtained from a particular bipartite DAG, then there exists a causal model  associated with that DAG that has the given operator as its observable covariance matrix  (see figure \ref{FigConverse}).
The proof is based on the observation that each positive semidefinite operator on a vector space can be interpreted as the covariance of a  vector-valued random variable on that space (e.g.~as the covariance of a multivariate normal distribution, or of variable over finite alphabets, as discussed in section \ref{SecRealPosdefCovm}). The essential idea is that we assign an independent random variable to each component in the decomposition, and take these as the latent variables, and that the support structure of the components furthermore determines the children of the latent variables.

\subsection{\label{SecRealDecompo}Realization of decompositions}

Let $O$ be a finite set, and let $\{\Omega_n\}_{n=1}^{N}$ be a collection of subsets of $O$. The collection $\{\Omega_n\}_{n=1}^{N}$  defines a bipartite DAG with $O$ as observable nodes, and a set of latent nodes $L_1,\ldots, L_N$, with the edges assigned by the identification $\mathrm{ch}(L_n):=\Omega_n$ for $n= 1,\ldots,N$. In the following we denote this bipartite DAG by $B(\{\Omega_n\}_{n=1}^{N})$.

\begin{figure}[h!]
 \includegraphics[width= 11cm]{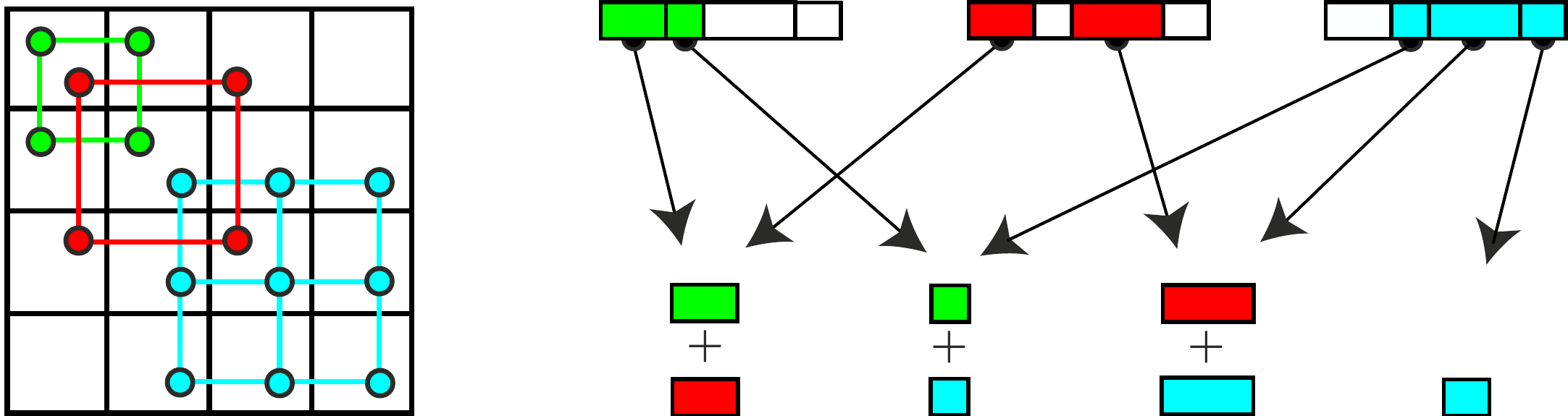} 
\caption{\label{FigConverse} {\bf } A positive semidefinite operator on a set of selected orthogonal subspaces can be regarded as the covariance matrix of a corresponding collection of vector-valued variables.  If this operator separates into positive semidefinite components (as schematically depicted to the left), then the support structures of these components define a bipartite DAG (on the right). The components in the decomposition can be interpreted as the covariance matrices of independent vector-valued latent variables. Moreover, the collection of subspaces on which such an operator has support determines the observable children of the corresponding latent variable. Each observable variable can be constructed by adding the components collected from its parents. 
}
\end{figure}

\begin{Proposition}
\label{PropRealDecomp}
Let $\mathcal{V}_1,\ldots,\mathcal{V}_M$ be finite-dimensional  real or complex inner-product spaces. 
For a number $N$ let $\{\Omega_n\}_{n=1}^{N}$ be a collection of subsets $\Omega_n\subset \{1,\ldots,M\}$. 
Suppose that $Q$ is a positive semidefinite operator on the space $\mathcal{V} = \mathcal{V}_1\oplus\cdots\oplus \mathcal{V}_M$, and that it can be written
\begin{equation}
Q = R + \sum_{n=1}^NC_n,\quad P^{(n)}C_nP^{(n)} = C_n, \quad R\geq 0, \quad C_n\geq 0,
\end{equation}
for 
\begin{equation}
\label{nvfdalavnl}
P^{(n)} =\sum_{m\in \Omega_n}P_m,\quad R = \sum_{m=1}^{M}P_m R P_m,
\end{equation}
with $P_m$ being the projectors onto the subspaces $\mathcal{V}_m$. Then there exists a causal model for the bipartite DAG $B(\{\Omega_n\}_{n=1}^{N})$ with vector-valued variables $Y_1,\ldots, Y_M$ in $\mathcal{V}_1,\ldots,\mathcal{V}_M$ such that $Y = Y_1+\cdots +Y_M$ satisfies 
\begin{equation}
\mathrm{Cov}(Y) = Q.
\end{equation}
\end{Proposition}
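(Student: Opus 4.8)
The plan is to explicitly construct the causal model by turning each positive semidefinite component in the decomposition into the covariance matrix of an independent latent variable, and then assembling the observables from their parents. First I would invoke the fact (proved in the section on realizing covariance matrices, referenced just before the proposition) that every positive semidefinite operator on a finite-dimensional inner-product space is the covariance of some vector-valued random variable on that space --- for instance a centered multivariate normal, or a discrete random vector over a finite alphabet. Applying this separately: for each $n$, view $C_n$ as an operator on the subspace $\mathcal{V}^{(n)} := \bigoplus_{m\in\Omega_n}\mathcal{V}_m$ (which is legitimate since $P^{(n)}C_nP^{(n)}=C_n$ means $C_n$ is supported there), and realize it as $\mathrm{Cov}(Z_n)$ for a random vector $Z_n$ valued in $\mathcal{V}^{(n)}$; similarly, for each $m$ write the block $R_m := P_mRP_m$ as $\mathrm{Cov}(W_m)$ for a random vector $W_m$ valued in $\mathcal{V}_m$. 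Choose all of the $Z_1,\dots,Z_N,W_1,\dots,W_M$ to be mutually independent (this is always possible by taking a product probability space).

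Next I would define the latent variables of the causal model to be $L_n := Z_n$ for $n=1,\dots,N$, and define each observable feature vector by collecting the contributions routed to it: writing $P_m^{(n)}$ for the projector onto $\mathcal{V}_m$ inside $\mathcal{V}^{(n)}$, set
\begin{equation}
Y_m := W_m + \sum_{n:\, m\in\Omega_n} P_m^{(n)} Z_n .
\end{equation}
Then $Y_m$ is a function of $W_m$ and of those $L_n=Z_n$ with $L_n\in\mathrm{pa}(O_m)$ (recall $m\in\Omega_n \Leftrightarrow O_m\in\mathrm{ch}(L_n) \Leftrightarrow L_n\in\mathrm{pa}(O_m)$), so the joint distribution of $(L_1,\dots,L_N,Y_1,\dots,Y_M)$ factorizes according to the bipartite DAG $B(\{\Omega_n\}_{n=1}^{N})$: the latents are independent, and conditioned on the latents the $Y_m$ depend only on the independent noise terms $W_m$ and hence are conditionally independent. (One may, if desired, also realize the underlying categorical observables $O_m$ and then push them forward through feature maps, but it is cleaner to just exhibit the $Y_m$ directly, since the proposition only asks for vector-valued variables with the right covariance.)

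Finally I would compute $\mathrm{Cov}(Y)$ for $Y = \sum_{m=1}^M Y_m$. Since $Y = \sum_m W_m + \sum_n Z_n$ (viewing $Z_n$ as embedded in $\mathcal{V}$ via $\mathcal{V}^{(n)}\hookrightarrow\mathcal{V}$), and all $2M+N-M = M+N$ summands $W_1,\dots,W_M,Z_1,\dots,Z_N$ are mutually independent, the covariance of the sum is the sum of the covariances: $\mathrm{Cov}(Y) = \sum_m \mathrm{Cov}(W_m) + \sum_n \mathrm{Cov}(Z_n) = \sum_m R_m + \sum_n C_n = R + \sum_n C_n = Q$, using $R=\sum_m P_mRP_m = \sum_m R_m$. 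The main (and really only) subtlety to be careful about is bookkeeping with the embeddings: one must check that the cross terms between distinct blocks vanish for the right reasons --- within a single $\mathrm{Cov}(Z_n)$ there can be nonzero cross-correlations between $\mathcal{V}_m$ and $\mathcal{V}_{m'}$ for $m,m'\in\Omega_n$ (this is exactly the off-diagonal structure of $C_n$), whereas across distinct independent summands the cross-correlation is zero, and this is precisely what makes the block-structure of $Q$ come out right. No genuine obstacle is expected; the work is entirely in making the identifications and independence structure precise.
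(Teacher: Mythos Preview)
Your proof is correct and follows the same core idea as the paper: realize each positive semidefinite component of the decomposition as the covariance of an independent random vector, then assemble the observable $Y_m$ by summing the pieces routed to it from its parents. The one difference is in the handling of the block-diagonal part $R$. You keep each $R_m=P_mRP_m$ as the covariance of a local noise term $W_m$ at node $m$, so that $Y_m$ depends on its parents \emph{and} on independent local randomness; this is perfectly legitimate for a causal model, since the conditional $P(O_m\mid \mathrm{pa}(O_m))$ may be stochastic. The paper instead first separates out the parentless nodes (where $R_{m'}$ must stay as local randomness), and for every $m$ that does have a parent it absorbs $R_m$ into one of the $C_n$'s to form $\tilde{C}_n\geq 0$, so that the resulting $Y_m$ is a \emph{deterministic} function of the latents. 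Your version is slightly cleaner (no case distinction, no redistribution step); the paper's version yields the marginally stronger conclusion that the observables with parents can be taken as deterministic functions of the latent variables.
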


\begin{proof}
Let us define the set $\Omega := \cup_{n=1}^{N}\Omega_n$ and its complement $\Omega^{c} := \{1,\ldots,M\}\setminus \Omega$. By construction, 
$\Omega^c$ is the set of observable nodes in the bipartite DAG $B(\{\Omega_{n}\}_{n=1}^{N})$ that have no parents (like vertex $4$ in figure \ref{FigBipartite}) and thus each element in $\Omega$ has at least one parent. By the definition of $P^{(n)}$ in (\ref{nvfdalavnl}) it follows that $\sum_{m'\in \Omega^c}P_{m'} C_n = C_n \sum_{m'\in \Omega^c}P_{m'}  = 0$. In other words, the operators $C_n$ have no support on the subspaces belonging to parentless observable nodes. Let us now turn to the operator $R$ and its block diagonal decomposition $R = \sum_{m=1}^{M}R_m$ with $R_m := P_mRP_m$. We can write $R = \sum_{m'\in \Omega^c}R_{m'} + \sum_{m\in \Omega}R_{m}$.  Consequently, $Q$ can be decomposed in one operator $\sum_{m\in \Omega}R_{m} + \sum_{n=1}^NC_n$ on the subspace $\bigoplus_{m\in \Omega}\mathcal{V}_m$, and a collection of blocks $\{R_{m'}\}_{m'\in \Omega^c}$ on the corresponding subspaces $\mathcal{V}_{m'}$ for $m'\in \Omega^c$. Since $R_{m'}$ is positive semidefinite, it can be interpreted as the covariance matrix of some random vector $Y_{m'}$ in $\mathcal{V}_{m'}$. In the following we assume that we have made such an assignment for all $m'\in \Omega^{c}$. We also assume that these random vectors are independent. 

Each $R_m$ for $m\in \Omega$ has its support inside the support of at least one $C_n$. Hence, we can `distribute' the operators $R_m$ for $m\in \Omega$ by forming new positive semidefinite operators $\tilde{C}_n\geq 0$ such that 
\begin{equation}
\label{nvnklva}
\sum_{m\in \Omega}R_m + \sum_{n=1}^{N}C_n = \sum_{n=1}^{N}\tilde{C}_n =: \tilde{Q},  
\end{equation}
where one may note that $Q = \sum_{m'\in \Omega^c}R_{m'} + \widetilde{Q}$.

In the following we shall assign observable and latent random variables to the vertices of the bipartite DAG $B(\{\Omega_{n}\}_{n=1}^{N})$.
For each $n\in \{1,\ldots,N\}$ and each $m\in \Omega$, let $\mathcal{L}^{n}_m$ be a vector space that is isomorphic to $\mathcal{V}_m$, and let  $\phi^{n}_m:\mathcal{L}^{n}_m \rightarrow \mathcal{V}_m$ be an arbitrary isomorphism. (We assume that these isomorphisms preserve the inner-product structure, such that $\phi^n_m$ maps orthonormal bases of $\mathcal{L}^n_m$ to orthonormal bases of $\mathcal{V}_m$.) We regard the spaces in the collection $\{\mathcal{L}^{n}_m\}_{m\in\Omega, n=1,\ldots, N}$ as being orthogonal to each other.  Define $\mathcal{L}^{n} : = \bigoplus_{m\in \Omega}\mathcal{L}^{n}_{m}$, and the corresponding isomorphism $\phi^{n} := \sum_{m\in \Omega}\phi^{n}_m$. 
Since each $\tilde{C}_n$ is positive semidefinite, it can be interpreted as the covariance matrix of a vector-valued random variable on $\bigoplus_{m\in \Omega}\mathcal{V}_m$. Consequently, we can also find a vector-valued random variable $L_n$ on $\mathcal{L}^{n}$ such that 
\begin{equation}
\label{nlkalkn}
\begin{split}
\tilde{C}_n =& \mathrm{Cov}(\phi^{n}L_n)=  \phi^{n}\mathrm{Cov}(L_n){\phi^{n}}^{\dagger}.
\end{split}
\end{equation}
We assume that the random variables $L_1,\ldots,L_N$ are independent of each other, and also independent of $\{Y_{m'}\}_{m'\in \Omega^c}$.

The variables $L_1,\ldots,L_N$ serve as the latent variables corresponding to the latent nodes in the bipartite DAG $B(\{\Omega_{n}\}_{n=1}^{N})$. In the following we shall construct a collection of vector-valued variables $\{Y_{m}\}_{m\in \Omega}$ as deterministic functions of the latent variables $L_1,\ldots,L_N$, in such a way that these functions correspond to the arrows in $B(\{\Omega_{n}\}_{n=1}^{N})$, thus guaranteeing a valid causal model associated with this bipartite DAG.

Let us decompose the vector $L_n$ into its projections $L^{n}_m$ onto the subspaces $\mathcal{L}^{n}_m$.  For each $m\in \Omega_n =  \mathrm{ch}(L_n)$, the vector $L^{n}_m$ is associated to the observable node $O_m$. (One can imagine it to be transferred to node $O_m$.) Equivalently we can say that each observable node $m\in \Omega$ receives the vector $L^{n}_m$ from its ancestor $n\in \mathrm{pa}(O_m)$.
On the observable node $m\in \Omega$ we construct a new vector $Y_m$ by adding all the vectors `sent to it' from its parents
\begin{equation}
Y_m :=   \sum_{n\in \mathrm{pa}(O_m)}\phi^{n}_mL^{n}_m =  \sum_{n\in \mathrm{pa}(O_m)}\phi^{n}_mL_n
=  \sum_{n=1}^{N}\phi^{n}_mL_n,
\end{equation}
where the last equality follows since $P_mC_nP_m = 0$ if $O_m\notin \mathrm{ch}(L_n)$, or equivalently if $L_n\notin \mathrm{pa}(O_m)$, and thus $\phi^{n}_mL_n = 0$ if $n\notin \mathrm{pa}(O_m)$. 
The  collection $\{Y_{m'}\}_{m'\in \Omega^c}\cup \{Y_m\}_{m\in \Omega}$ we take as the observable variables, and we define $Y:= \sum_{m'\in \Omega^c} Y_{m'} + \sum_{m\in \Omega}Y_m =  \sum_{m'\in \Omega^c} Y_{m'} + \sum_{n=1}^{N}\phi^nL_n$.

Due to the fact that all $Y_{m'}$ for $m'\in \Omega^c$ are independent, and also independent of all $L_n$, we get
\begin{equation*}
\begin{split}
\mathrm{Cov}(Y)= & \sum_{m'\in \Omega^c}\mathrm{Cov}(Y_{m'}) + \mathrm{Cov}\Big(\sum_{n=1}^{N}\phi^nL_n, \sum_{n'=1}^{N}\phi^{n'}L_{n'}\Big)\\
= & \sum_{m'\in \Omega^c}R_{m'} +  \sum_{n,n'=1}^{N}\phi^n\mathrm{Cov}(L_n,L_{n'}){\phi^{n'}}^{\dagger}\\
&[\textrm{$L_1,\ldots,L_N$ are independent}]\\
= &  \sum_{m'\in \Omega^c}R_{m'} +\sum_{n=1}^{N}\phi^n\mathrm{Cov}(L_n){\phi^n}^{\dagger}\\
&[\textrm{By (\ref{nlkalkn})}]\\
= &  \sum_{m'\in \Omega^c}R_{m'} +\sum_{n=1}^{N}\tilde{C}_n\\
&[\textrm{By (\ref{nvnklva})}]\\
= &Q. 
\end{split}
\end{equation*}
\end{proof}

\subsection{\label{SecRealPosdefCovm}Positive semidefinite operators as covariance matrices of vector-valued random variables over finite alphabets}

The material in the previous section presumes the existence of realizations of positive semidefinite operators as the covariance of some vector-valued variable, without making any restriction on ther nature.  As mentioned above, each positive semi-definite operator (over a finite-dimensional  real or complex vector space) can be regarded as the covariance of a multivariate normal distribution. However, suppose that we would require that the variable only can take a finite number of outcomes. Here we briefly discuss the conditions for such realizations, and provide an explicit construction (in the proof of Lemma \ref{CondPosd}).

For a (possibly vector-valued) random variable over a finite alphabet, we say that that the supported alphabet size is $D$, if there are precisely $D$ outcomes that occur with a non-zero probability.
\begin{Lemma}
\label{nbklfdbnl}
If a random variable $Y$ on a finite-dimensional real or complex inner-product space has a supported alphabet size $D$, then $\rank\big(\mathrm{Cov}(Y)\big) \leq D-1$. 
\end{Lemma}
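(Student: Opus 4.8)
The plan is to use the variational characterization of the covariance matrix together with a counting argument on the support of $Y$. Concretely, let $y_1,\ldots,y_D$ be the distinct outcomes of $Y$ occurring with probabilities $p_1,\ldots,p_D>0$, and set $\mu := E(Y)=\sum_{d=1}^D p_d y_d$. Then
\begin{equation*}
\mathrm{Cov}(Y) = \sum_{d=1}^{D} p_d (y_d-\mu)(y_d-\mu)^{\dagger}.
\end{equation*}
This exhibits $\mathrm{Cov}(Y)$ as a nonnegative combination of the $D$ rank-one operators $(y_d-\mu)(y_d-\mu)^{\dagger}$, so immediately $\mathrm{rank}\big(\mathrm{Cov}(Y)\big)\leq D$. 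The point of the lemma is the sharper bound $D-1$, and the extra saving comes from the single linear relation $\sum_{d=1}^D p_d (y_d-\mu) = 0$ among the vectors $\{y_d-\mu\}_{d=1}^D$.

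First I would make the rank count precise: the range of $\mathrm{Cov}(Y)$ is contained in the span of $\{y_d-\mu : d=1,\ldots,D\}$ (any $v$ orthogonal to all $y_d-\mu$ satisfies $v^{\dagger}\mathrm{Cov}(Y)v = \sum_d p_d |v^{\dagger}(y_d-\mu)|^2 = 0$, and since $\mathrm{Cov}(Y)\geq 0$ this forces $\mathrm{Cov}(Y)v=0$). So it suffices to bound $\dim \mathrm{span}\{y_d-\mu\}_{d=1}^D \leq D-1$. Next I would observe that $\mu$ is an affine (indeed convex) combination of the $y_d$, so the $D$ points $y_1,\ldots,y_D$ lie in an affine subspace of dimension at most $D-1$ through $\mu$; equivalently the $D$ vectors $y_d-\mu$ satisfy the nontrivial linear dependence $\sum_d p_d(y_d-\mu)=0$ with all $p_d>0$, hence span a space of dimension at most $D-1$. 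Combining the two bounds gives $\mathrm{rank}\big(\mathrm{Cov}(Y)\big)\leq D-1$.

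I expect no real obstacle here — the argument is elementary linear algebra once the spectral/variational picture is in place. The only mild subtlety is handling the complex case uniformly (using $|v^{\dagger}(y_d-\mu)|^2$ rather than a square), and making sure the passage from ``range $\subseteq$ span'' to the rank bound is justified by positive semidefiniteness rather than by appealing to a symmetric eigenbasis; both are routine. A secondary check worth stating is that the bound is attained (e.g.\ by $D$ affinely independent points with generic probabilities), which confirms the lemma is tight and motivates the alphabet-size constraints discussed afterwards, but this is not needed for the proof of the stated inequality.
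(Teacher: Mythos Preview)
Your proof is correct and slightly cleaner than the paper's. The paper works with the uncentered expansion $\mathrm{Cov}(Y) = \sum_j p_j y_j y_j^{\dagger} - E(Y)E(Y)^{\dagger}$ and then splits into two cases: if $y_1,\ldots,y_D$ are linearly dependent the bound $D-1$ is immediate, while if they are linearly independent the paper rewrites $\mathrm{Cov}(Y) = \sum_{j,j'} y_j Q_{j,j'} y_{j'}^{\dagger}$ with $Q_{j,j'} = p_j\delta_{j,j'} - p_j p_{j'}$ and observes that the all-ones vector lies in the kernel of $Q$, so $\rank Q \leq D-1$. Your approach instead uses the centered form $\mathrm{Cov}(Y) = \sum_d p_d (y_d-\mu)(y_d-\mu)^{\dagger}$, from which the range is visibly contained in $\mathrm{span}\{y_d-\mu\}$, and the single relation $\sum_d p_d(y_d-\mu)=0$ does the work of the paper's kernel computation without any case distinction. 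The underlying mechanism is the same linear dependence (the paper's $Q\bar 1 = 0$ is exactly your $\sum_d p_d(y_d-\mu)=0$ in coordinates), but centering first lets you avoid the split and the auxiliary matrix $Q$ altogether. One minor remark: your containment $\mathrm{range}\,\mathrm{Cov}(Y)\subseteq \mathrm{span}\{y_d-\mu\}$ follows even more directly by writing $\mathrm{Cov}(Y)w = \sum_d p_d (y_d-\mu)\,[(y_d-\mu)^{\dagger}w]$, so the appeal to positive semidefiniteness is not strictly needed there.
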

\begin{proof}
 We first note that $\mathrm{Cov}(Y) = \sum_{j=1}^{D}p_jy_jy_j^{\dagger} - \sum_{j=1}^{D}p_jy_j \sum_{j'=1}^{D}p_{j'}y_{j'}^{\dagger}$.
Since $\sum_{j=1}^{D}p_jy_j$ very manifestly is a linear combination  of $y_1,\ldots,y_D$, it follows that the range of $\sum_{j=1}^{D}p_jy_j \sum_{j'=1}^{D}p_{j'}y_{j'}^{\dagger}$ is a subset of the range of $\sum_{j=1}^{D}p_jy_jy_j^{\dagger}$, and thus $\rank\big(\mathrm{Cov}(Y)\big) \leq  \rank\big(\sum_{j=1}^{D}p_jy_jy_j^{\dagger}\big)\leq D$. However, in the following we shall show that the stronger inequality $\rank\big(\mathrm{Cov}(Y)\big) \leq D-1$ holds. To see this, let us first consider the case that $y_1,\ldots,y_D$ are linearly dependent. This means that at least one of these vectors is a linear combination of the others, and thus
 $\rank\big(\mathrm{Cov}(Y)\big) \leq D-1$. 
Let us now instead assume that $y_1,\ldots,y_D$ is a linearly independent set.  Define $Q := [Q_{j,j'}]_{j,j' =1}^{D}$ by $Q_{j,j'}:= p_j\delta_{j,j'} - p_{j}p_{j'}$, then $\mathrm{Cov}(Y) = \sum_{j,j'}y_{j}Q_{j,j'}y_{j'}^{\dagger}$. Hence, $Q$ is the matrix representation of $\mathrm{Cov}(Y)$ with respect to the linearly independent, but not necessarily orthonormal set $y_1,\ldots,y_D$. One can realize that due to the linear independence, it follows that $\rank\big(\mathrm{Cov}(Y)\big) = \rank(Q)$. Finally, let us define the $D$-dimensional vector  $\overline{1} := (1,\ldots,1)^{\dagger}/\sqrt{D}$. One can confirm that $Q\overline{1} = 0$. Hence, $\rank(Q) \leq D-1$, and we can conclude that  $\rank\big(\mathrm{Cov}(Y)\big) \leq D-1$.
\end{proof}

\begin{Lemma}
\label{CondPosd}
Let $C$ be a positive semidefinite operator on a finite-dimensional real or complex inner-product space $\mathcal{V}$. 
For every $D \geq \rank(C) +1$ there exists a vector-valued random variable $Y$ on $\mathcal{V}$ with supported alphabet size $D$, such that $C = \mathrm{Cov}(Y)$. However, $C \neq \mathrm{Cov}(Y)$ for all $Y$ with a supported alphabet size $D < \rank(C)+1$.
\end{Lemma}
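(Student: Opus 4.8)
\textbf{Proof plan for Lemma \ref{CondPosd}.}

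The plan is to prove the two halves separately. The ``however'' part (the lower bound on $D$) is the easy direction: it is an immediate consequence of Lemma \ref{nbklfdbnl}. Indeed, if $Y$ has supported alphabet size $D$, then $\rank\big(\mathrm{Cov}(Y)\big) \leq D-1$, so if $C = \mathrm{Cov}(Y)$ we must have $\rank(C) \leq D-1$, i.e., $D \geq \rank(C)+1$. Contrapositively, no $Y$ with supported alphabet size $D < \rank(C)+1$ can have $C = \mathrm{Cov}(Y)$. So the only real work is the explicit construction.

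For the existence part, first I would reduce to the generic case by restricting attention to the range of $C$: write $r := \rank(C)$ and choose an orthonormal basis of $\mathrm{range}(C)$, so that $C$ acts as a strictly positive definite $r\times r$ matrix $\Lambda$ on that subspace and as zero elsewhere; it suffices to realize $\Lambda$ as the covariance of a vector-valued variable supported on $D$ outcomes inside $\mathrm{range}(C)$. I would then write $\Lambda = B B^{\dagger}$ for some injective $B$ (e.g.\ $B = \Lambda^{1/2}$), so that it is enough to produce a random vector $U$ in $\mathbb{R}^{r}$ (or $\mathbb{C}^r$) with supported alphabet size $D$ and $\mathrm{Cov}(U) = \mathbbm{1}_r$; then $Y := B U$ does the job and inherits supported alphabet size $D$ because $B$ is injective. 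The task is thus reduced to the clean, $C$-independent statement: for every $D \geq r+1$, there is a $D$-outcome random vector in $\mathbb{R}^r$ with identity covariance.

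For the latter, the natural construction is a \emph{regular simplex}: place $D$ points $v_1,\ldots,v_D \in \mathbb{R}^{D-1}$ at the vertices of a regular simplex centered at the origin, assign them equal probabilities $1/D$, and rescale so that the covariance is the identity on $\mathbb{R}^{D-1}$ (this uses the standard simplex identities $\sum_j v_j = 0$ and $v_j\cdot v_{j'}$ constant off-diagonal, which force $\frac1D\sum_j v_j v_j^{\dagger}$ to be a multiple of $\mathbbm{1}_{D-1}$). This handles the case $D = r+1$ exactly. For $D > r+1$ one can either (i) take a regular $D$-simplex in $\mathbb{R}^{D-1}$ with identity covariance and then apply an orthogonal projection / isometric embedding argument to push it down to $\mathbb{R}^{r}$ while keeping $D$ distinct outcomes and identity covariance — but projection does not preserve the covariance in general, so more care is needed; or, more cleanly, (ii) build the $D$-outcome variable in $\mathbb{R}^r$ directly by mixing: take the $(r+1)$-outcome simplex variable with probability $1-\epsilon$ and, with probability $\epsilon$, split one of its outcomes into $D-r$ nearby outcomes arranged so the local covariance contribution is isotropic, then rescale. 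Actually the simplest uniform recipe is: pick any $D$ vectors $u_1,\dots,u_D \in \mathbb{R}^{r}$ that affinely span $\mathbb{R}^r$ (possible since $D \geq r+1$) and are in ``general position'', together with probabilities $p_j>0$; the resulting covariance $\sum_j p_j u_j u_j^{\dagger} - (\sum_j p_j u_j)(\sum_j p_j u_j)^{\dagger}$ is some positive definite $S$ on $\mathbb{R}^r$ (positive definite because the $u_j$ affinely span, cf.\ the rank computation in Lemma \ref{nbklfdbnl}), and then $S^{-1/2} u_j$ gives identity covariance with $D$ distinct supported outcomes.

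\textbf{Main obstacle.} The conceptual content is entirely in the existence direction, and within it the only genuinely delicate point is making sure the constructed variable has supported alphabet size \emph{exactly} $D$ (not fewer): the $D$ chosen base points must be distinct and remain distinct after the linear map $S^{-1/2}$ (automatic, since $S^{-1/2}$ is invertible) and must all carry strictly positive probability. Ensuring the affine-span / general-position condition so that $S$ is invertible is the one place where a short argument is needed, and it is exactly the rank bookkeeping already carried out in the proof of Lemma \ref{nbklfdbnl} (read in the ``linearly independent'' direction), so I would simply invoke that reasoning rather than redo it.
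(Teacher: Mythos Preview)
Your proposal is correct, and its overall architecture matches the paper's: invoke Lemma~\ref{nbklfdbnl} for the impossibility direction, restrict to $\mathrm{range}(C)$, build a $D$-atom variable with identity covariance there, and push forward by $\sqrt{C}$. Where you diverge is in the construction of the isotropic $D$-atom ensemble. The paper does this in one explicit stroke rather than by whitening generic points: it picks a $D\times D$ orthogonal (unitary) matrix $U$ whose $(K{+}1)$st column is the normalized all-ones vector $\bar{1}=(1,\dots,1)^{\dagger}/\sqrt{D}$, sets $v_j:=\sum_{k=1}^{K}U_{j,k}z_k$ in $\mathrm{range}(C)$, and takes $y_j:=\sqrt{D}\sqrt{C}\,v_j$ with uniform weights $p_j=1/D$. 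Unitarity of $U$ immediately gives $\sum_j v_jv_j^{\dagger}=P_C$, and orthogonality of the first $K$ columns to $\bar{1}$ gives $\sum_j v_j=0$, so the mean vanishes and the covariance equals $C$ without any $S^{-1/2}$ step. Your normalization recipe has the advantage that distinctness of the $D$ atoms is manifest (invertible maps preserve it), a point the paper glosses over---there one must observe that a generic completion of $U$ makes the $D$ row-vectors $(U_{j,1},\dots,U_{j,K})$ pairwise distinct. One small correction to your plan: your appeal to Lemma~\ref{nbklfdbnl} for the invertibility of $S$ is misplaced, since that lemma's rank computation treats the linearly independent case, whereas here $D>r$ forces the $u_j$ to be dependent; what you actually need is the (separate, elementary) fact that positive-weight atoms affinely spanning $\mathbb{R}^r$ produce a rank-$r$ covariance.
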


\begin{proof}
Let $D$ be the supported alphabet size of a vector-valued random variable $Y$.
If  $D < \rank(C) +1$, then we know from Lemma \ref{nbklfdbnl} that $C\neq \mathrm{Cov}(Y)$. Hence, it remains to show that it is possible to find a $Y$ such that $C = \mathrm{Cov}(Y)$ for every $D \geq \rank(C)+1$. We thus wish to find a collection of vectors $y_1,\ldots,y_D\in\mathcal{V}$, and $p_1,\ldots,p_D$ with $p_j >0$, and $\sum_{j=1}^Dp_j = 1$, such that  $C = \sum_{j=1}^{D}p_jy_jy_j^{\dagger} - \sum_{j=1}^{D}p_jy_j \sum_{j'=1}^{D}p_{j'}y_{j'}^{\dagger}$.

Let $\{z_k\}_{k=1}^K$ be an orthonormal basis of the range (support) of the operator $C$, and let $P_C$ be the  projector onto the range. Let $U$ be a matrix in $\mathbb{R}^{D\times D}$ ($\mathbb{C}^{D\times D}$) if the underlying space $\mathcal{V}$ is real (complex).  Since $D\geq K+1$, we can assign the $(K+1)$th column of $U$ to be the vector $\overline{1}:=(1,\ldots, 1)^{\dagger}/\sqrt{D}$ (i.e., $U_{j,K+1} = \frac{1}{\sqrt{D}}$ for all $j= 1,\ldots, D$) and we arbitrarily complete the rest of the matrix $U$ such that it becomes orthogonal (unitary). Since $U$ is orthogonal (unitary), it follows that its columns form an orthonormal basis of $\mathbb{R}^{D}$ ($\mathbb{C}^{D}$). Hence, for each $k=1,\ldots, K$ it must be the case that the vector $(U_{j,k})_{j=1}^{D}$ is orthogonal to $\overline{1}$, and thus 
\begin{equation}
\label{weorpb}
\sum_{j=1}^{D}U_{j,k} = 0,\quad k=1,\ldots, K.
\end{equation}
Next, define the set of vectors $\{v_{j}\}_{j=1}^{D}\subset \mathcal{V}$ by $v_{j} := \sum_{k=1}^{K}U_{j,k}z_{k}$.
One can confirm that $\sum_{j=1}^{D}v_jv_j^{\dagger} =  P_C$, as well as $\sum_{j=1}^{D}v_j
= \sum_{k=1}^{K}\sum_{j=1}^{D}U_{j,k}z_{k}=  0$, where we use (\ref{weorpb}).
As the final step we define $p_j := \frac{1}{D}$ and $y_j := \sqrt{D}\sqrt{C}v_j$ for $j =1,\ldots,D$.
One can confirm that 
\begin{equation*}
\begin{split}
\sum_{j=1}^{D}p_jy_jy_j^{\dagger} = \sqrt{C}\sum_{j=1}^{D}v_jv_j^{\dagger}\sqrt{C}=  \sqrt{C}P_C\sqrt{C}=  C,\quad\quad \sum_{j=1}^{D}p_jy_j = \frac{1}{\sqrt{D}}\sqrt{C}\sum_{j=1}^{D}v_j=  0.
\end{split}
\end{equation*}
Thus, if a vector-valued random variable $Y$ takes $y_j$ with probability $p_j$, we have $\mathrm{Cov}(Y) = C$.
\end{proof}

\section{\label{SecOperatorInequalities}Implied operator inequalities}

 Here we show that the existence of positive semidefinite decompositions as in Proposition \ref{PropDecomposition} implies operator inequalities of a type studied in \cite{VonPrillwitz15MasterThesis}. 

Consider as usual a  bipartite DAG with latent variables $L_1,\ldots, L_N$ and observables $O_1,\ldots, O_M$  with assigned feature maps $Y_1,\ldots,Y_M$ into vector spaces $\mathcal{V}_1,\ldots,\mathcal{V}_M$.
For a number $d$ (whose meaning is going to be evident shortly) we define the following map on the space of operators on $\mathcal{V} = \oplus_{m=1}^M\mathcal{V}_m$ 
\begin{equation}
\label{Mapdef}
\Phi(Q) := (d-1)P_1QP_1 + \sum_{m=2}^{M}(P_mQP_m + P_1QP_m + P_mQP_1),
\end{equation}
where $P_m$ are the projectors onto the spaces $\mathcal{V}_m$ as discussed in section \ref{SecObsLat}.
Theorem 4.1 in  \cite{VonPrillwitz15MasterThesis} does in essence say that if all the latent variables $L_n$ in the given bipartite DAG have degree at most $d$, then the resulting covariance matrix $\mathrm{Cov}(Y)$ satisfies 
\begin{equation}
\label{fmbkdlmfb}
\Phi\big(\mathrm{Cov}(Y)\big)\geq 0,
\end{equation}
or if one prefers  matrix notation
\begin{equation}
\label{savlnkaf}
\left[\begin{matrix}
 (d-1)\mathrm{Cov}(Y_1) & \mathrm{Cov}(Y_1,Y_2) &  \cdots  & \cdots & \mathrm{Cov}(Y_1,Y_M)\\
\mathrm{Cov}(Y_2,Y_1) & \mathrm{Cov}(Y_2) &  0 & \cdots  & 0 \\
\vdots &  0 & \ddots  & \ddots &\vdots \\
\vdots  & \vdots & \ddots & \ddots  &  0\\
\mathrm{Cov}(Y_M,Y_1) & 0 &\cdots & 0 &  \mathrm{Cov}(Y_M) 
\end{matrix}\right]\geq 0.
\end{equation}  
Hence, by deleting a particular collection of blocks from the full covariance matrix $\mathrm{Cov}(Y)$, and adding copies of the diagonal block $\mathrm{Cov}(Y_1)$, we obtain a positive semidefinite operator. It may be worth emphasizing that mere positive semidefiniteness of $Q$ is not enough to guarantee that $\Phi(Q)$ is positive semidefinite. Hence, (\ref{fmbkdlmfb}) can indeed be used as a test of the underlying latent structure. As one may note, equations (\ref{fmbkdlmfb}) and (\ref{savlnkaf}) single out observable $1$, but by relabelling we can obtain  analogous inequalities for all observables. As an example, for the triangular scenario in figure \ref{FigTriangle}, the inequality  (\ref{savlnkaf}) and its permutations take the form
\begin{equation*}
\left[\begin{smallmatrix}
\mathrm{Cov}(Y_1) & \mathrm{Cov}(Y_1,Y_2) & 0\\
\mathrm{Cov}(Y_2,Y_1) & \mathrm{Cov}(Y_2) & \mathrm{Cov}(Y_2,Y_3)\\
0 &  \mathrm{Cov}(Y_3,Y_2) &\mathrm{Cov}(Y_3) 
\end{smallmatrix}\right]\geq 0,\quad 
\left[\begin{smallmatrix}
\mathrm{Cov}(Y_1) & \mathrm{Cov}(Y_1,Y_2) & \mathrm{Cov}(Y_1,Y_3)\\
\mathrm{Cov}(Y_2,Y_1) & \mathrm{Cov}(Y_2) & 0\\
\mathrm{Cov}(Y_3,Y_1) &  0 &\mathrm{Cov}(Y_3) 
\end{smallmatrix}\right]\geq 0,\quad 
\left[\begin{smallmatrix}
\mathrm{Cov}(Y_1) & 0 & \mathrm{Cov}(Y_1,Y_3)\\
0 & \mathrm{Cov}(Y_2) & \mathrm{Cov}(Y_2,Y_3)\\
\mathrm{Cov}(Y_3,Y_1) &  \mathrm{Cov}(Y_3,Y_2) &\mathrm{Cov}(Y_3) 
\end{smallmatrix}\right]\geq 0.
\end{equation*}

The following proposition shows that the semidefinite decomposition implies the operator inequality (\ref{fmbkdlmfb}) under the assumption that all the latent variables (regarded as vertices in a bipartite graph) have the degree at most $d$.
\begin{Proposition}
For a bipartite DAG with latent variables $L_1,\ldots, L_N$, each with degree at most $d$, and observables $O_1,\ldots, O_M$  with assigned feature maps $Y_1,\ldots,Y_M$ into finite-dimensional real or complex inner-product spaces $\mathcal{V}_1,\ldots,\mathcal{V}_M$, the covariance matrix of $Y = \sum_{m=1}^{M}Y_m$ satisfies $\Phi\big(\mathrm{Cov}(Y)\big)\geq 0$, where $\Phi$ is as defined in (\ref{Mapdef}).
\end{Proposition}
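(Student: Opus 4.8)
\section*{Proof proposal}

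The plan is to combine the linearity of the map $\Phi$ in (\ref{Mapdef}) with the positive semidefinite decomposition of Proposition~\ref{PropDecomposition}. By that proposition we may write $\mathrm{Cov}(Y) = R + \sum_{n=1}^{N} C_n$ with $R = \sum_{m=1}^{M} P_m R P_m$, with $P^{(n)} C_n P^{(n)} = C_n$, and with $R \geq 0$, $C_n \geq 0$. Since $\Phi$ is linear, $\Phi\big(\mathrm{Cov}(Y)\big) = \Phi(R) + \sum_{n=1}^{N} \Phi(C_n)$, so it suffices to show $\Phi(R) \geq 0$ and $\Phi(C_n) \geq 0$ for every $n$. (We use throughout that in a bipartite DAG the latent vertex $L_n$ has no parents, so its degree equals $|\mathrm{ch}(L_n)|$; the hypothesis is thus $|\mathrm{ch}(L_n)| \leq d$, and we take $d \geq 1$.)

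The term $\Phi(R)$ is immediate: since $R$ is block diagonal with respect to $\mathcal{V} = \bigoplus_m \mathcal{V}_m$, every cross term $P_1 R P_m$ and $P_m R P_1$ with $m \geq 2$ in (\ref{Mapdef}) vanishes, leaving $\Phi(R) = (d-1) P_1 R P_1 + \sum_{m=2}^{M} P_m R P_m$, a direct sum of positive semidefinite blocks (the first scaled by $d-1 \geq 0$), hence positive semidefinite.

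For each $C_n$ I would argue as follows. Put $S := \mathrm{ch}(L_n) \subseteq \{1,\dots,M\}$, so $|S| \leq d$, and by (\ref{acdsjd}) the operator $C_n$ is supported on $\bigoplus_{m\in S}\mathcal{V}_m$, i.e.\ $P_m C_n = C_n P_m = 0$ for $m \notin S$. If $1 \notin S$, then the first block row and column of $C_n$ vanish and $\Phi(C_n) = \sum_{m\in S} P_m C_n P_m$ is just the block-diagonal pinching of $C_n$, hence positive semidefinite. So assume $1 \in S$ and enumerate $S = \{1, m_2, \dots, m_k\}$ with $k = |S| \leq d$. Unwinding (\ref{Mapdef}) one sees that, as an operator on $\mathcal{V}$ supported on $\bigoplus_{m\in S}\mathcal{V}_m$, $\Phi(C_n)$ is an \emph{arrowhead} block matrix: $(1,1)$ block $(d-1)P_1 C_n P_1$, diagonal blocks $P_{m_j} C_n P_{m_j}$, first block row and column carrying $P_1 C_n P_{m_j}$ and $P_{m_j} C_n P_1$, and all remaining blocks zero. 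For $j = 2,\dots,k$ let $T_j$ be the operator on $\mathcal{V}$ whose only nonzero part is the $2\times 2$ block matrix with diagonal blocks $P_1 C_n P_1$, $P_{m_j} C_n P_{m_j}$ and off-diagonal blocks $P_1 C_n P_{m_j}$, $P_{m_j} C_n P_1$ sitting in the $\{1,m_j\}$ slot. Since this $2\times 2$ block matrix equals $(P_1+P_{m_j})C_n(P_1+P_{m_j})$ restricted to $\mathcal{V}_1\oplus\mathcal{V}_{m_j}$, and $C_n \geq 0$, we get $T_j \geq 0$. Now $\sum_{j=2}^{k} T_j$ is precisely the arrowhead matrix above except that its $(1,1)$ block is $(k-1)P_1 C_n P_1$ rather than $(d-1)P_1 C_n P_1$; hence $\Phi(C_n) = \sum_{j=2}^{k} T_j + (d-k)\,P_1 C_n P_1 \geq 0$, using $P_1 C_n P_1 \geq 0$ and $k \leq d$ (the case $k=1$ being trivial, as then $\Phi(C_n) = (d-1)P_1 C_n P_1$). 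Summing over $n$ and adding $\Phi(R)$ yields $\Phi\big(\mathrm{Cov}(Y)\big)\geq 0$.

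The only non-routine step is this arrowhead decomposition. The two facts carrying the argument are: (i) each $2\times 2$ block built from $P_1 C_n P_1$, $P_{m_j}C_nP_1$, $P_1C_nP_{m_j}$, $P_{m_j}C_nP_{m_j}$ is a principal submatrix of the positive semidefinite operator $C_n$, hence positive semidefinite; and (ii) the degree bound $|\mathrm{ch}(L_n)| = k \leq d$ is exactly what makes the coefficient $d-1$ of the repeatedly-counted corner block at least the number $k-1$ of copies produced by the $T_j$. Everything else is bookkeeping with (\ref{Mapdef}) and the support property (\ref{acdsjd}).
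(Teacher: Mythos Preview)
Your proof is correct and follows essentially the same strategy as the paper's: decompose $\mathrm{Cov}(Y)$ via Proposition~\ref{PropDecomposition}, handle $\Phi(R)$ by block-diagonality, and for each $C_n$ with $1\in\mathrm{ch}(L_n)$ write $\Phi(C_n)$ as a sum of the conjugations $(P_1+P_{m_j})C_n(P_1+P_{m_j})$. Your treatment of the case $k=|\mathrm{ch}(L_n)|<d$ is in fact slightly cleaner than the paper's---you explicitly isolate the leftover $(d-k)P_1C_nP_1$, whereas the paper tacitly pads the enumeration $m(2),\dots,m(d)$ out to length $d$.
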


\begin{proof}
We know from Proposition \ref{PropDecomposition} that  $\mathrm{Cov}(Y) = R+\sum_{n=1}^NC_n$ with $P^{(n)}C_nP^{(n)} = C_n$, $C_n\geq 0$, and where $R$ is such that $\sum_{m}P_m R P_m = R$ and $R\geq 0$. Due to this, we have 
\begin{equation}
\label{abkdfbieuf}
\Phi(R) = (1-d)P_1RP_1 + \sum_{m=2}^{M}P_mRP_m \geq 0.
\end{equation}
 For each $C_n$ we can distinguish two cases. 

In the first case, $C_n$ has no support on $\mathcal{V}_1$, i.e., $P_1C_nP_1 = 0$. Due to the positive semidefiniteness of $C_n$ it also follows that $P_1C_nP_j = 0$ for $j = 2,\ldots,M$, and thus
\begin{equation}
\label{mlvdmld}
 \Phi(C_n) = \sum_{m=2}^{M}P_m C_n P_m \geq 0.
\end{equation}
 In the second case, $C_n$ does have a support on $\mathcal{V}_1$, meaning that $P_1C_n P_1 \neq 0$. By assumption, the latent variable $L_n$ has degree at most $d$, which means that $C_n$ has support on at most $d$ of the subspaces $\mathcal{V}_1,\ldots,\mathcal{V}_M$. Hence, apart form $\mathcal{V}_1$, there are at most $d-1$ further spaces involved. We enumerate these spaces as $\mathcal{V}_{m(2)},\ldots,\mathcal{V}_{m(d)}$, and let $\mathcal{V}_{m(1)} = \mathcal{V}_1$.  Hence, it may be the case that $P_{m(j)}C_n P_{m(j)} \neq 0$ for $j=1,\ldots, d$, while $P_{m}C_n P_m = 0$ for the remaining values of $m$. Due to the positive semidefiniteness of $C_n$, we can analogously  have $P_1C_n P_{m(j)}\neq 0$, and $P_{m(j)}C_n P_{1}\neq 0$, but $P_1C_n P_{m}= 0$, and $P_{m}C_n P_{1} =  0$ for the other values of $m$. We can conclude that 
\begin{equation}
\label{salkdfm}
\begin{split}
\Phi(C_n) = & (d-1)P_1C_nP_1 + \sum_{m=2}^{M}(P_mC_nP_m + P_1C_nP_m + P_m C_nP_1)\\
= &  (d-1)P_1C_nP_1 + \sum_{j=2}^{d}(P_{m(j)}C_nP_{m(j)} + P_1C_nP_{m(j)} + P_{m(j)}C_n P_1)\\
= &   \sum_{j=2}^{d}\Big(P_1 +P_{m(j)}\Big) C_n\Big(P_1 +   P_{m(j)}\Big)\geq 0.
\end{split}
\end{equation}
The combination of  (\ref{abkdfbieuf}), (\ref{mlvdmld}) and (\ref{salkdfm}) yields  $\Phi\big(\mathrm{Cov}(Y)\big) = \Phi(R) + \sum_{n=1}^{N}\Phi(C_n)\geq 0$, which proves (\ref{fmbkdlmfb}).
\end{proof}

We note that the operator inequalities derived here need not be tight in all cases.
Indeed, it is not hard to verify that the maps $\Phi_\alpha$ defined by
\begin{equation*}
	\Phi_\alpha: 
\left[\begin{smallmatrix}
\mathrm{Cov}(Y_1) & \mathrm{Cov}(Y_1,Y_2) & \mathrm{Cov}(Y_1,Y_3)\\
\mathrm{Cov}(Y_2,Y_1) & \mathrm{Cov}(Y_2) & \mathrm{Cov}(Y_2,Y_3)\\
\mathrm{Cov}(Y_3,Y_1) &  \mathrm{Cov}(Y_3,Y_2) &\mathrm{Cov}(Y_3) 
\end{smallmatrix}\right]
\mapsto
\left[\begin{smallmatrix}
\mathrm{Cov}(Y_1) & e^{i\alpha} \mathrm{Cov}(Y_1,Y_2) & \mathrm{Cov}(Y_1,Y_3)\\
e^{-i\alpha}\mathrm{Cov}(Y_2,Y_1) & \mathrm{Cov}(Y_2) & \mathrm{Cov}(Y_2,Y_3)\\
\mathrm{Cov}(Y_3,Y_1) &  \mathrm{Cov}(Y_3,Y_2) &\mathrm{Cov}(Y_3) 
\end{smallmatrix}\right]
\end{equation*}
preserve the set of covariance matrices compatible with the triangle scenario.
Here, $\alpha\in[0,2\pi)$ is a phase factor.
In particular, $\Phi_\alpha$ preserves positivity when acting on covariance matrices arising in this context.
This is a strictly stronger result than the one we have obtained above:
The map $\Phi$ treated in the proposition is just the equal-weight convex combination of $\Phi_\pi$ and $\Phi_0$:
\begin{equation*}
  \frac12
  \left[\begin{smallmatrix}
  \mathrm{Cov}(Y_1) & \mathrm{Cov}(Y_1,Y_2) & \mathrm{Cov}(Y_1,Y_3)\\
  \mathrm{Cov}(Y_2,Y_1) & \mathrm{Cov}(Y_2) & \mathrm{Cov}(Y_2,Y_3)\\
  \mathrm{Cov}(Y_3,Y_1) &  \mathrm{Cov}(Y_3,Y_2) &\mathrm{Cov}(Y_3) 
  \end{smallmatrix}\right]
  +
  \frac12
  \left[\begin{smallmatrix}
  \mathrm{Cov}(Y_1) & -\mathrm{Cov}(Y_1,Y_2) & \mathrm{Cov}(Y_1,Y_3)\\
  -\mathrm{Cov}(Y_2,Y_1) & \mathrm{Cov}(Y_2) & \mathrm{Cov}(Y_2,Y_3)\\
  \mathrm{Cov}(Y_3,Y_1) &  \mathrm{Cov}(Y_3,Y_2) &\mathrm{Cov}(Y_3) 
  \end{smallmatrix}\right]
  =
  \left[\begin{smallmatrix}
  \mathrm{Cov}(Y_1) & 0 & \mathrm{Cov}(Y_1,Y_3)\\
  0 & \mathrm{Cov}(Y_2) & \mathrm{Cov}(Y_2,Y_3)\\
  \mathrm{Cov}(Y_3,Y_1) &  \mathrm{Cov}(Y_3,Y_2) &\mathrm{Cov}(Y_3) 
  \end{smallmatrix}\right].
\end{equation*}

It may potentially be fruitful to consider a general theory of maps that preserve the convex cone of covariances compatible with a given causal structure.

\section{\label{SecUniversalFeatureMaps}Universal feature maps for finite categorical variables} As the reader may have realized, the choice of feature maps $Y^{(m)}$ may affect the outcome of the semidefinite test. In other words, even if we find a particular setup that is compatible with the given bipartite DAG, it may be the case that another assignment of the vectors $Y_{m}$ could yield a violation; thus potentially suggesting that we ideally should test an infinite number of choices. However, in the case of observable variables with only finite number of outcomes, we shall here see that one can make a single test, based on a sufficiently `powerful' choice of feature maps.  Suppose that the variables $O_m$ can only take a finite number of outcomes $o_1^m,\ldots, o^m_{d_m}$. An arbitrary assignment of a feature map would correspond to a collection of vectors $y^m_1,\ldots, y^m_{d_m}\in \mathcal{V}_m$ for some vector space $\mathcal{V}_m$. Now suppose that we make the additional restriction that $y^m_1,\ldots, y^m_{d_m}$ are linearly independent, and that $\dim(\mathcal{V}_m) = d_m$.  
Suppose that we have some other arbitrary assignment of feature map $\tilde{Y}_m$ given by a collection of vectors $\tilde{y}^m_1,\ldots, \tilde{y}^m_{d_m}\in \tilde{\mathcal{V}}_m$ for some vector space $\tilde{\mathcal{V}}_m$ (without any requirement of  linear independence). 
One can realize that it is always possible to find a linear map $\phi_m:\mathcal{V}_m\rightarrow \tilde{\mathcal{V}}_m$ such that $\phi_m y^m_j = \tilde{y}^m_j$, and thus $\phi_m Y_m = \tilde{Y}_m$. 
To see this, one can note that since  $y^m_1,\ldots, y^m_{d_m}$ is a linearly independent set in a $d_m$-dimensional space, it follows that the Gram matrix $G = [G_{j,j'}]_{j,j' = 1}^{d_m}$ with $G_{j,j'} := (y^m_j,y^m_{j'})$ is invertible (and positive definite). One can confirm that $\phi_m$ defined by $\phi_m(v) := \sum_{jj'}\tilde{y}^m_j[G^{-1}]_{jj'}(y^m_{j'},v)$ satisfies $\phi_m y^m_j = \tilde{y}^m_j$. In other words, a feature map with linearly independent components is `universal' in the sense that  we can generate all other feature maps on all other vector spaces, and it is moreover sufficient to do this via linear transformations.

For a collection of universal feature maps $Y_1,\ldots,Y_M$ assigned to $O_1,\ldots, O_M$, we can reach all other feature maps $\tilde{Y}_1,\ldots, \tilde{Y}_M$, by linear operations $\tilde{Y}_m = \phi_m Y_m$. Moreover, the covariance matrix $\mathrm{Cov}(Y)$ for $Y = \sum_{m=1}^{M}Y_m$ and the covariance matrix $\mathrm{Cov}(\tilde{Y})$ for $\tilde{Y} = \sum_{m=1}^{M}\tilde{Y}_m$ are related by $\mathrm{Cov}(\tilde{Y}) = \phi\mathrm{Cov}(Y)\phi^{\dagger}$ for $\phi := \sum_{m=1}^{M}\phi_m$. One can realize that if $\mathrm{Cov}(Y)$ satisfies the decomposition in Proposition \ref{PropDecomposition} for a given bipartite DAG, then $\mathrm{Cov}(\tilde{Y})$ also satisfies the decomposition. We can conclude that it is sufficient to apply the semidefinite test for a single collection of feature maps, where each of these have linearly independent components. (A convenient choice would be mappings to orthonormal bases.)
 
It is conceivable that a similar construction would hold for variables with a countably infinite number of outcomes, and it is an interesting question if one in some sense could make `universal' assignments of feature maps also in the case of a continuum. However, we shall not consider these issues in this investigation, but leave them as open questions. 

\section{\label{SecMonotonicity} Monotonicity under local operations}

Suppose that we would process each observable variable in a collection $O_1,\ldots, O_M$ `locally'. In other words,  the output $\tilde{O}_m$ is  a (possibly random) function only of $O_m$. If we restrict ourselves to discrete random variables, then this type of mapping from an input distribution $P^{M}$ of the $O_1,\ldots, O_M$, to the output distribution $\tilde{P}^{M}$ of $\tilde{O}_1,\ldots, \tilde{O}_M$ can be written
\begin{equation}
\label{fbdjkfbdk}
\tilde{P}^{M}(\tilde{x}_1,\ldots,\tilde{x}_M) := \sum_{x_1\ldots,x_M}P^1(\tilde{x}_1|x_1)\cdots P^M(\tilde{x}_M|x_M)P^{M}(x_1,\ldots,x_M),
\end{equation}
where all $P^{m}(\tilde{x}_m|x_m)$ are conditional distributions.
 From this construction it is clear that if a distribution $P^{M}$ is compatible with the given bipartite DAG, then the resulting distribution $\tilde{P}^{M}$ on $\tilde{O}_1,\ldots,\tilde{O}_M$ will also be compatible with the very same DAG. In other words, compatibility with a given bipartite DAG is in this sense a monotone with respect to local operations.

There is a priori no reason to expect that relaxations of the compatibility problem would satisfy this monotonicity. However, here we show that this property is respected by the semidefinite test, if the latter is based on universal feature maps (in the sense of the previous section). The fact that universality is needed can be seen from the following trivial special case. We assign feature maps $Y_m$ to $O_m$, and $\tilde{Y}_m$ to $\tilde{O}_m$. In principle we can for each $m$ choose all components of $Y_m$ to be identical, thus resulting in a zero covariance matrix that trivially satisfies all decompositions, while $\tilde{Y}_m$ may still result in a violation. By assuming that all the feature maps $Y_m$ are universal, we shall in the following see that monotonicity is guaranteed.

Let us first focus on the transformation of a single observable variable $O_m$ to $\tilde{O}_m$, and let us assume that $Y_m$ has the  linearly independent components $y_1^m,\ldots, y^m_K$, with Gram matrix $G = [G_{x,x'}]_{x,x' =1}^{K}$ with $G_{x,x'} = (y_{x}^m,y_{x'}^m)$, in a $K$-dimensional vector space $\mathcal{V}_m$. $G$ is invertible since $y_1^m,\ldots, y^m_K$ are linearly independent.  Let $\tilde{y}^m_1,\ldots,\tilde{y}^m_L$ be the components of $\tilde{Y}_m$ in $\tilde{\mathcal{V}}_m$. (If $L = K$ we can of course  choose $\tilde{y}^m := y^m$ as a special case.)
Define $\psi_m(v) := \sum_{\tilde{x},x',x''}\tilde{y}_{\tilde{x}}P^m(\tilde{x}|x')[G^{-1}]_{x',x''}(y_{x''},v)$. (Here and in the following  we omit the superscript `$m$' on the vectors $y$ for notational convenience.)
One can confirm that $E(\tilde{Y}_m) = \psi_m\big(E(Y_m)\big)$, and thus with $\psi =\sum_{m}\psi_m$ we get $E(\tilde{Y}) = \psi\big(E(Y)\big)$.

It may be very tempting to assume that $\mathrm{Cov}(\tilde{Y})$ would be equal to $\psi\mathrm{Cov}(Y)\psi^{\dagger}$. However, this is generally \emph{not} the case. The off-diagonal blocks for $m\neq m'$ satisfy $\mathrm{Cov}(\tilde{Y}_m,\tilde{Y}_{m'}) = \psi_m\mathrm{Cov}(Y_m,Y_{m'})\psi_{m'}^{\dagger}$.

However, for the diagonal blocks it is the case that 
\begin{equation}
\label{ndklvald}
\begin{split}
\mathrm{Cov}(\tilde{Y}_m) = & \psi_m\mathrm{Cov}(Y_m)\psi^{\dagger}_m + W_m,\\ 
W_m := & 
\sum_{\tilde{x},x}\tilde{y}_{\tilde{x}}{\tilde{y}_{\tilde{x}}}^{\dagger}P^m( \tilde{x}|x)P(O_m = x)
-\sum_{\tilde{x},\tilde{x}',x}\tilde{y}_{\tilde{x}}\tilde{y}_{\tilde{x}'}^{\dagger}P^m(\tilde{x}|x)P^m(\tilde{x}'|x)P(O_m = x).
\end{split}
\end{equation}
One can note that each `correction term' $W_m$ is supported only on the subspace $\tilde{\mathcal{V}}_m$, and one can moreover show that $W_m\geq 0$. To see the latter, let $c\in\tilde{\mathcal{V}}_m$, and define $z_{\tilde{x}} = (c,\tilde{y}_{\tilde{x}})$. Then
\begin{equation*}
(c,W_mc) = \sum_{x}P(O_m = x)\Big( \sum_{\tilde{x}}|z_{\tilde{x}}|^2P^m(\tilde{x}|x) -\Big|\sum_{\tilde{x}}   z_{\tilde{x}} P^m(\tilde{x}|x)\Big|^2\Big) = \sum_{x,\tilde{x}}P(O_m = x)P^m(\tilde{x}|x)\Big| z_{\tilde{x}} - \sum_{\tilde{x}'}P^m(\tilde{x}'|x)z_{\tilde{x}'} \Big|^2\geq 0.
\end{equation*}

If $\mathrm{Cov}(Y)$ satisfies the decomposition (\ref{poeto}) in Proposition \ref{PropDecomposition} for some bipartite DAG, then one can confirm that $\psi\mathrm{Cov}(Y)\psi^{\dagger}$ also satisfies the corresponding decomposition with respect to the subspaces $\{\tilde{\mathcal{V}}_m\}_m$. Moreover, since the correction terms $W_m$ are positive semidefinite and block-diagonal with respect to these subspaces, it follows that $\mathrm{Cov}(\tilde{Y}) = \psi\mathrm{Cov}(Y)\psi^{\dagger} + \sum_{m}W_m$ also satisfies the decomposition.
 We can thus conclude that if the initial feature maps $Y_1,\ldots, Y_M$ are universal, then the test is monotonous with respect to local operations.

As a final remark one may note that in the special case that all $P^{m}(\tilde{x}|x)$ correspond to deterministic mappings, i.e., when the output $\tilde{x}$ is a (deterministic) function of the input $x$, then  $P^{m}(\tilde{x}|x)P^{m}(\tilde{x}'|x) = \delta_{\tilde{x},\tilde{x}'}P^{m}(\tilde{x}|x)$, and (\ref{ndklvald})  results in $W_m = 0$, which yields $\mathrm{Cov}(\tilde{Y}) = \psi\mathrm{Cov}(Y)\psi^{\dagger}$. Linear transformations $\phi^m:\mathcal{V}_m\rightarrow \tilde{\mathcal{V}}_m$ all result in mappings $\tilde{Y}_m = \phi^m(Y_m)$ that belong to this deterministic special case (presuming that the maps $\phi^m$ themselves are not random variables) where we let $P^{m}(\tilde{x}|x') = \delta_{\tilde{x},x'}$ and $\tilde{y}^m_{\tilde{x}} = \phi^{m}(y^m_{\tilde{x}})$, thus leading to $\mathrm{Cov}(\phi Y) = \phi\mathrm{Cov}(Y)\phi^{\dagger}$ (cf.~the isomorphisms in  (\ref{nlkalkn}), or the maps $\phi$ used in section \ref{SecUniversalFeatureMaps}).

\section{\label{SecMonotoneFamily} A monotone family of distributions}
Here we shall consider a specific family of multi-partite distributions that is monotone in the sense of the previous section, for which the analysis of the semidefinite decomposition simplifies. We shall in particular consider the case of the triangular scenario in figure \ref{FigTriangle}, which turns out to be convenient for the comparison with the entropic tests, which we consider in section \ref{SecComparison}.
 
\subsection{\label{SecDefiningFamily}Defining the family}
Suppose that we have a collection of variables, each of which has $D\geq 2$ possible outcomes.
In equation (\ref{fbdjkfbdk}) 
we described local operations transforming an initial distribution $P^{M}$. For the local operations we do in this case choose
\begin{equation}
\label{localtransf}
P_p(\tilde{x}|x) := (1-p)\delta_{\tilde{x},x} + p\frac{1}{D}.
\end{equation}
Hence, on each variable we (independently) apply the same type of process, where with probability $p$ we replace the input with a uniformly distributed output, and with probability $1-p$ leave the input intact. Here we choose the input distribution to be $ P^{M}(x_1,\ldots,x_M) = \delta_{x_1,\ldots,x_M}/D$, where the generalized Kronecker delta  is such that  $\delta_{x_1,\ldots,x_M} = 1$ if $x_1 = \cdots =x_M$, while zero otherwise. Hence, $P^{M}(x_1,\ldots,x_M) $ describes $M$ perfectly correlated variables.
By applying  (\ref{fbdjkfbdk})  with the local operations (\ref{localtransf}) we thus obtain a new global distribution 
\begin{equation}
\label{DefPMNp}
\tilde{P}^{M:D}_p(\tilde{x}_1,\ldots,\tilde{x}_M) := \frac{1}{D}\sum_{x_1,\ldots,x_M}P_p(\tilde{x}_1|x_1)\cdots P_p(\tilde{x}_M|x_M)\delta_{x_1,\ldots,x_M},
\end{equation}
where we have added the extra superscript $D$ to indicate the alphabet size of the local random variables.
 By construction, this distribution is permutation symmetric over all the variables.
Moreover, one can confirm that all mono-, bi-, and higher-partite margins of $\tilde{P}^{M:D}_p$ are independent of how many parties $M$ the total distribution $\tilde{P}^{M:D}_p$ involves. For example, the bipartite margin of $\tilde{P}^{M:D}_p$ is equal to $\tilde{P}^{2:D}_p$. Generally, for $M' < M$ it is the case that 
\begin{equation}
\label{MarginalSelfSimilarity}
\tilde{P}^{M':D}_p(\tilde{x}_1,\ldots,\tilde{x}_{M'}) = \sum_{\tilde{x}_{M'+1},\ldots,\tilde{x}_{M}}\tilde{P}^{M:D}_p(\tilde{x}_1,\ldots,\tilde{x}_M).
\end{equation}
Hence, every margin of every family member is another family member.

Since $\tilde{P}_{1}^{M:D}$ is a product distribution over all the observable variables, it is compatible with every bipartite DAG, while $\tilde{P}_{0}^{M:D}$ is perfectly correlated, and thus would only be compatible with bipartite DAGs where some latent variable has edges to all observable variables. 
One can note that the local operations in (\ref{localtransf}) are such that if $1\geq p'\geq p\geq 0$, then there exists a $1\geq q\geq 0$ such that
\begin{equation}
\label{semigroup}
P_{p'}(\tilde{x}|x) = \sum_{x'}P_{q}(\tilde{x}|x')P_{p}(x'|x).
\end{equation}
(Any $1\geq q\geq 0$ is a valid choice if $p=1$, while $q = (p'-p)/(1-p)$ if $1> p\geq 0$.)
Consequently, if $p'\geq p$, then $\tilde{P}^{M:D}_{p'}$ can be generated from $\tilde{P}^{M:D}_p$ by local operations. 
By the reasoning in section \ref{SecMonotonicity} it thus follows that there is some value $p^{*}$ where $\tilde{P}_{p}^{M:D}$ switches from being incompatible to being compatible with the given bipartite DAG (and it cannot switch back again for higher values of $p$). From section \ref{SecMonotonicity} we also know that the semidefinite test also has this monotonic behavior if we choose universal feature maps, although the switch may occur at a lower value of $p$.

\subsection{\label{SecWithinFamily}Within the family: the existence of a semidefinite decomposition is independent of the local alphabet size}
Here we show that the semidefinite test takes a particularly simple form for the family $\tilde{P}^{M:D}_{p}$. In essence we show that the test can be reduced to a test on an $M\times M$ matrix that only depends on $p$, but not on the local alphabet size $D$. A similar result was obtained in (section 4.5 of) \cite{VonPrillwitz15MasterThesis}, for the operator inequalities described in section \ref{SecOperatorInequalities}, but for distributions of the type $v\delta_{x_1,\ldots,x_M}/D-(1-v)/D^2$, while we here consider the family $\tilde{P}_{p}^{M:D}$ defined by (\ref{DefPMNp}).

Suppose that we have an $M$-partite distribution $\tilde{P}^{M:D}_{p}$. We know from the previous section that this distribution is permutation symmetric, and in particular we know from (\ref{MarginalSelfSimilarity}) that all bipartite marginal distributions are of the form $\tilde{P}^{2:D}_{p}$, and all mono-partite marginals are of the form $\tilde{P}^{1:D}_p$. One can moreover  confirm that
\begin{equation}
\label{BinaryAndMono}
\tilde{P}^{2:D}_{p}(\tilde{x},\tilde{x}') =  (1-p)^2\frac{1}{D}\delta_{\tilde{x},\tilde{x}'}+ p(2-p)\frac{1}{D^2},\quad \tilde{P}^{1:D}_p(\tilde{x}_1) = \frac{1}{D}.
\end{equation}

In order to construct a covariance matrix, we here assume feature maps $Y_1,\ldots, Y_M$ that have orthonormal components (i.e.,  feature map $Y_m$ maps the set of possible outcomes of the $m$th random variable to an orthonormal basis of $\mathcal{V}_m$, where $\dim(\mathcal{V}_m) = D$). Hence, the total space $\mathcal{V} = \mathcal{V}_1\oplus\cdots\oplus\mathcal{V}_M$ is $DM$-dimensional, and we can write it as a tensor product $\mathcal{V} = \mathcal{V}^D\otimes \mathcal{V}^M$ of a $D$-dimensional space $\mathcal{V}^D$ and an $M$-dimensional space $\mathcal{V}^M$. By choosing an orthonormal basis $\{e_m\}_{m=1}^{M}$ of $\mathcal{V}^M$, we can identify $\mathcal{V}_m = \mathcal{V}^D\otimes\Sp\{e_m\}$. In section \ref{SecObsLat} we defined the projectors $P_m$ onto the subspaces $\mathcal{V}_m$, and we can write these projectors as
\begin{equation}
\label{anvldfadfn}
P_m = \hat{1}_D\otimes \hat{e}_m,
\end{equation} 
where $\hat{1}_D$ is the identity operator on $\mathcal{V}^D$, and $\hat{e}_m$ is the projector onto $e_m$.

The covariance matrix $\mathrm{Cov}(Y)$ for the random variable $Y = Y_1 +\cdots +Y_M$ is an $MD\times MD$ matrix and takes a particularly simple form
\begin{equation}
\label{nadjkfba}
\mathrm{Cov}(Y) = \left[\begin{matrix}
         Q & (1-p)^2Q & \cdots & (1-p)^2Q\\
 (1-p)^2Q    & Q & \ddots &  \vdots\\
\vdots                               & \ddots    & Q & (1-p)^2 Q \\
(1-p)^2Q & \cdots    & (1-p)^2Q &    Q
\end{matrix}\right] = \frac{1}{D}Q\otimes C(p), 
\end{equation}
where we define the $M\times M$ matrix
\begin{equation}
\label{vnoekfnbf}
C(p) := \left[\begin{matrix} 
1 & (1-p)^2 & \cdots & (1-p)^2 \\
(1-p)^2 & 1 & \ddots & \vdots\\
\vdots     & \ddots &  & (1-p)^2\\
(1-p)^2   & \cdots & (1-p)^2 & 1
\end{matrix}\right]
\end{equation}
and the $D\times D$ matrix $Q$ with elements
\begin{equation}
Q_{\tilde{x},\tilde{x}'}  
:=  \delta_{\tilde{x},\tilde{x}'} -\frac{1}{D},\quad \tilde{x},\tilde{x}' = 1,\ldots, D.
\end{equation}
Note that we can write $Q = \hat{1}_D - cc^{\dagger}$, where $c = (1,\ldots,1)^{\dagger}/\sqrt{D}\in\mathcal{V}^{D}$ is normalized. Hence, $Q$ is the projector onto the $(D-1)$-dimensional subspace of $\mathcal{V}^D$ that is the orthogonal complement to the one-dimensional subspace spanned by $c$. From $Q$ being  a projector, it also follows that $Q\geq 0$.

Suppose now that we have a particular bipartite DAG $B$ with observable variables $O_1,\ldots,O_M$ and latent variables $L_1,\ldots, L_N$. As we recall from section \ref{SecDecBipartDAGs}, the semidefinite test is characterized via the projectors $P^{(n)} = \sum_{m\in \mathrm{ch}(L_n)}P_m$ as
\begin{equation}
\label{djfvna}
\mathrm{Cov}(Y) = R + \sum_{n=1}^NC_n,\quad P^{(n)}C_{n}P^{(n)} = C_n,\quad C_n\geq 0, \quad \sum_{m=1}^{M}P_mRP_m = R,\quad R\geq 0.
\end{equation}
In the present case, we can write these projectors as
\begin{equation}
\label{aklndfkal}
 P^{(n)} = I_D\otimes \tilde{P}^{(n)},\quad \tilde{P}^{(n)} = \sum_{m\in \mathrm{ch}(L_n)}\tilde{P}_m,
\end{equation}
with $\tilde{P}_m$ as in (\ref{anvldfadfn}).

For each fixed number of observable variables $M$, local alphabet size $D$, and given bipartite DAG $B$,  we know that the family $\tilde{P}^{M:D}_p$ is monotone with respect to $p$, in the sense that the covariance matrix $\mathrm{Cov}(Y)$ satisfies the semidefinite decomposition for all $p$ beyond a certain threshold value, while it is violated for all values below. The following proposition shows that this threshold is independent of $D$, and that it can be determined via simplified decomposition of the matrix $C(p)$.

\begin{Proposition}
\label{PropCovDec}
Let $\mathrm{Cov}(Y)$ be the covariance matrix, for feature maps with orthonormal components, corresponding to the distribution $\tilde{P}^{M:D}_p$, as defined in (\ref{DefPMNp}), for $M$ observable variables,  and local alphabet size $D\geq 2$. 
For each value $1\geq p \geq 0$ it is the case that $\mathrm{Cov}(Y)$ satisfies the semidefinite decomposition (\ref{djfvna}) with respect to a given bipartite DAG $B$, if and only if $C(p)$, defined in (\ref{vnoekfnbf}), satisfies the decomposition 
\begin{equation}
\label{mjmh}
C(p) = \tilde{R} + \sum_{n=1}^{N}\tilde{C}_n,\quad \tilde{P}^{(n)}C_n\tilde{P}^{(n)},\quad \tilde{C}_n\geq 0,\quad \sum_{m=1}^{N}\tilde{P}_m\tilde{R}\tilde{P}_m = \tilde{R}, \quad \tilde{R}\geq 0. 
\end{equation}
Moreover, there exists a number $1\geq \overline{p}(B) \geq 0$ that does not depend on $D$, such that  $\mathrm{Cov}(Y)$ satisfies (\ref{djfvna}) and $C(p)$ satisfies (\ref{mjmh}) for all $p >\overline{p}(B)$, while $\mathrm{Cov}(Y)$ and $C(p)$ do not satisfy the decompositions for $p <\overline{p}(B)$.
\end{Proposition}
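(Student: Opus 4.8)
The plan is to exploit the tensor-product structure established in (\ref{nadjkfba}), namely $\mathrm{Cov}(Y) = \frac{1}{D}Q\otimes C(p)$, together with the factorizations $P^{(n)} = \hat{1}_D\otimes\tilde{P}^{(n)}$ and $P_m = \hat{1}_D\otimes\hat{e}_m$ from (\ref{anvldfadfn}) and (\ref{aklndfkal}), and the fact (noted after (\ref{vnoekfnbf})) that $Q = \hat{1}_D - cc^{\dagger}\geq 0$ is a projector of rank $D-1\geq 1$. The equivalence of the two decompositions will be proved by transporting a decomposition from one side to the other through this tensor factor, and the $D$-independence of the threshold will then be immediate because the condition (\ref{mjmh}) on $C(p)$ does not mention $D$ at all.

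For the ``if'' direction, suppose $C(p) = \tilde{R}+\sum_{n=1}^{N}\tilde{C}_n$ is a decomposition of the form (\ref{mjmh}). I would simply set $R := \frac{1}{D}Q\otimes\tilde{R}$ and $C_n := \frac{1}{D}Q\otimes\tilde{C}_n$. Positivity is immediate from $Q\geq 0$; the support condition $P^{(n)}C_nP^{(n)} = C_n$ follows by conjugating with $\hat{1}_D\otimes\tilde{P}^{(n)}$ and using $\tilde{P}^{(n)}\tilde{C}_n\tilde{P}^{(n)}=\tilde{C}_n$, and similarly $\sum_m P_m R P_m = \frac{1}{D}Q\otimes\sum_m\hat{e}_m\tilde{R}\hat{e}_m = R$; and summing gives $R+\sum_n C_n = \frac{1}{D}Q\otimes C(p) = \mathrm{Cov}(Y)$. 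For the ``only if'' direction I would fix a unit vector $q$ in the range of $Q$ (which exists precisely because $D\geq 2$), so that $Qq=q$ and $q^{\dagger}Qq = 1$, and compress a given decomposition $\mathrm{Cov}(Y) = R+\sum_n C_n$ by defining $\tilde{R} := D\,(q^{\dagger}\otimes\hat{1}_M)R(q\otimes\hat{1}_M)$ and $\tilde{C}_n := D\,(q^{\dagger}\otimes\hat{1}_M)C_n(q\otimes\hat{1}_M)$, where $\hat{1}_M$ is the identity on $\mathcal{V}^M$. Conjugation by $q^{\dagger}\otimes\hat{1}_M$ preserves positivity, and since $(q^{\dagger}\otimes\hat{1}_M)P^{(n)} = q^{\dagger}\otimes\tilde{P}^{(n)} = \tilde{P}^{(n)}(q^{\dagger}\otimes\hat{1}_M)$ (and likewise with $P_m$ and $\hat{e}_m$), the support conditions of (\ref{mjmh}) are inherited from those of (\ref{djfvna}); finally $(q^{\dagger}\otimes\hat{1}_M)\mathrm{Cov}(Y)(q\otimes\hat{1}_M) = \frac{1}{D}(q^{\dagger}Qq)\,C(p) = \frac{1}{D}C(p)$, so $\tilde{R}+\sum_n\tilde{C}_n = C(p)$. (A suitably normalized partial trace over the $D$-factor would serve the same purpose.)

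For the final clause, let $S(B) := \{\,p\in[0,1] : C(p)\text{ satisfies }(\ref{mjmh})\,\}$; by the equivalence just proved, $\mathrm{Cov}(Y)$ satisfies (\ref{djfvna}) exactly when $p\in S(B)$, for every $D\geq 2$, so $S(B)$ carries no $D$-dependence. The set $S(B)$ is nonempty since $p=1$ gives $C(1)=\hat{1}_M$, trivially decomposable. It remains to see that $S(B)$ is upward closed. This follows from the monotonicity already established in sections \ref{SecMonotonicity} and \ref{SecMonotoneFamily} (orthonormal feature maps being universal); alternatively, self-containedly, for $0\leq p\leq p'\leq 1$ one has the identity $C(p') = \lambda C(p) + (1-\lambda)\hat{1}_M$ with $\lambda := (1-p')^2/(1-p)^2\in[0,1]$ (the case $p=1$ being trivial), so a decomposition of $C(p)$ becomes one of $C(p')$ upon replacing $\tilde{R}$ by $\lambda\tilde{R}+(1-\lambda)\hat{1}_M$ (still block-diagonal and positive) and each $\tilde{C}_n$ by $\lambda\tilde{C}_n$ (still positive, same support). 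Hence $S(B) = [\overline{p}(B),1]$ or $(\overline{p}(B),1]$ with $\overline{p}(B):=\inf S(B)$, a number independent of $D$, and for $p>\overline{p}(B)$ one has $p\in S(B)$ while for $p<\overline{p}(B)$ one has $p\notin S(B)$; translating back through the equivalence gives the statement for $\mathrm{Cov}(Y)$.

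I do not expect a serious obstacle: the argument is essentially bookkeeping with tensor products and projectors. The one place that needs care is the ``only if'' direction, where one must verify that $P^{(n)}$ and $P_m$ genuinely factor through the compression $q^{\dagger}\otimes\hat{1}_M$ so that the support structure is carried along intact, and that a unit vector $q$ with $Qq=q$ is available — which is exactly where the hypothesis $D\geq 2$ is used (for $D=1$ the covariance matrix vanishes and the proposition is vacuous).
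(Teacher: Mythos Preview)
Your proof is correct and follows essentially the same approach as the paper: tensoring with $Q/D$ for the ``if'' direction, and compressing against a unit vector in the range of $Q$ for the ``only if'' direction, with the threshold obtained from monotonicity. Your self-contained alternative for the upward-closedness of $S(B)$ via $C(p') = \lambda C(p) + (1-\lambda)\hat{1}_M$ is a nice addition that the paper does not give (it simply invokes the monotonicity of section~\ref{SecDefiningFamily}).
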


\begin{proof}
First we shall show that if $C(p)$ satisfies the decomposition, then $\mathrm{Cov}(Y)$ also satisfies the decomposition.
Let $p$ be any $1\geq p\geq 0$ such that there exists a semidefinite decomposition of $C(p)$ as in (\ref{vnoekfnbf}).
Equation  (\ref{mjmh}) provides $\tilde{R}$ and $\tilde{C}_n$. Define $R := Q\otimes\tilde{R}/D$ and $C_n := Q\otimes \tilde{C}_n/D$. Thus defined, it follows that
\begin{equation*}
R + \sum_{n}C_n = \frac{1}{D}Q\otimes (\tilde{R} + \sum_{n}\tilde{C}_n) = \frac{1}{D}Q\otimes C(p) = \textrm{Cov}(Y). 
\end{equation*}
Moreover, by the conditions in (\ref{mjmh}) and the observations in (\ref{aklndfkal}), it follows that 
\begin{equation*}
P^{(n)}C_nP^{(n)} = [\hat{1}_D\otimes \tilde{P}^{(n)}][\frac{1}{D}Q\otimes \tilde{C}_n] [\hat{1}_D\otimes \tilde{P}^{(n)}]  = C_n,
\end{equation*}
 and $C_n = Q\otimes \tilde{C}_n/D\geq 0$.

 Furthermore, by  the conditions in (\ref{mjmh})  and (\ref{anvldfadfn}), it follows that  
\begin{equation*}
\sum_m P_mR P_m = \sum_{m}[\hat{1}_D\otimes \tilde{P}_m][\frac{1}{D}Q\otimes\tilde{R}][\hat{1}_D\otimes \tilde{P}_m] = R,
\end{equation*}
 and $R = Q\otimes\tilde{R}/D\geq 0$. Hence, this procedure produces a valid semidefinite decomposition of $\mathrm{Cov}(Y)$. 
Hence, for every $p$ for which $C(p)$ has a valid decomposition, it follows that $\mathrm{Cov}(Y)$ also has a valid decomposition. 

Next we prove the opposite implication, namely that the existence of a decomposition of $\mathrm{Cov}(Y)$ implies a decomposition  of $C(p)$.  Let us thus assume that there is a $1\geq p\geq 0$ for which there exists a decomposition of $\mathrm{Cov}(Y)$ as in (\ref{djfvna}). Equation  (\ref{djfvna}) provides $R$ and $C_n$. Let $v\in\mathcal{V}^D$ be normalized, and such that $Qv = v$. Such a $v$ always exists, since $Q$ is a projector onto a $(D-1)$-dimensional subspace of $\mathcal{V}^D$ and $D\geq 2$. Define
$\tilde{R}:=Dv^{\dagger}Rv$ and $\tilde{C}_n := Dv^{\dagger}C_nv$ (where one should keep in mind that e.g.~$v^{\dagger}Rv$ is an operator on $\mathcal{V}^M$, since $v\in\mathcal{V}^D$). Hence, by (\ref{djfvna}) and (\ref{nadjkfba})
\begin{equation*}
\begin{split}
\tilde{R} +\sum_n\tilde{C}_n = & Dv^{\dagger}(R +\sum_nC_n)v =  Dv^{\dagger}\mathrm{Cov}(Y)v=  Dv^{\dagger}[\frac{1}{D}Q\otimes C(p)]v = v^{\dagger}Qv C(p) = C(p).
\end{split}
\end{equation*}
Moreover, by the conditions in (\ref{djfvna}) and the observations in (\ref{aklndfkal}), it follows that 
\begin{equation*} 
\begin{split}
\tilde{P}^{(n)}\tilde{C}_n\tilde{P}^{(n)} = & \tilde{P}^{(n)}  Dv^{\dagger}C_nv \tilde{P}^{(n)} =  Dv^{\dagger}[\hat{1}_D\otimes\tilde{P}^{(n)}]   C_n[\hat{1}_D\otimes\tilde{P}^{(n)}]v  = Dv^{\dagger}P^{(n)}   C_n P^{(n)}v =   Dv^{\dagger} C_nv = \tilde{C}_n,
\end{split}
\end{equation*}
and $\tilde{C}_n := Dv^{\dagger}C_nv\geq 0$.
Furthermore, (\ref{djfvna}) and  (\ref{anvldfadfn}) yields
\begin{equation*} 
\begin{split}
\sum_m\tilde{P}_m\tilde{R}\tilde{P}M = & \sum_m \tilde{P}_m Dv^{\dagger}Rv \tilde{P}_m = \sum_m Dv^{\dagger}[\hat{1}_D\otimes\tilde{P}_{m}]   R[\hat{1}_D\otimes\tilde{P}_m]v =   Dv^{\dagger}(\sum_m P_{m}R P_m)v  =  Dv^{\dagger}Rv =  \tilde{R},
\end{split}
\end{equation*}
and $\tilde{R}:=Dv^{\dagger}Rv\geq 0$. Hence, we can conclude that the decomposition of $\mathrm{Cov}(Y)$ induces a valid decomposition of $C(p)$ as in (\ref{mjmh}).

We know from section \ref{SecDefiningFamily} that the family $\tilde{P}^{M:D}_p$ is monotone, in the sense that $\mathrm{Cov}(Y)$ (since it is based on orthonormal feature maps) satisfies the semidefinite decomposition for all $p$ beyond a certain threshold value, which we can call  $\overline{p}(B)$, while violating the decomposition for all $p$ below $\overline{p}(B)$. From the above equivalence we conclude that the same transition is valid for $C(p)$ with respect to the decomposition in (\ref{mjmh}).
\end{proof}

\subsection{\label{SecTripartiteMonotone} Compatibility with the triangular DAG}
Here we consider the tripartite case  and determine the value of $p$ where $\tilde{P}^{3:D}_p$ switches from not satisfying the semidefinite decomposition, to satisfying it, with respect to the  triangular scenario in figure \ref{FigTriangle}.
The family of distributions $\tilde{P}^{M:D}_p$, defined in (\ref{DefPMNp}), does in the tripartite case take the form
\begin{equation}
\label{ndvklanlkv}
\begin{split}
\tilde{P}^{3:D}_p(\tilde{x}_1,\tilde{x}_2,\tilde{x}_3) 
=  &  (1-p)^3\frac{1}{D}\delta_{\tilde{x}_1,\tilde{x}_2,\tilde{x}_3}\\
& + p(1-p)^2\frac{1}{D^2}[\delta_{\tilde{x}_1,\tilde{x}_2}  +\delta_{\tilde{x}_1,\tilde{x}_3} + \delta_{\tilde{x}_2,\tilde{x}_3}]\\
& + p^2(3-2p)\frac{1}{D^3},
\end{split}
\end{equation}
and the matrix $C(p)$ and the projectors $ \tilde{P}^{(1)}$, $ \tilde{P}^{(2)}$, and  $\tilde{P}^{(3)}$ become
\begin{equation*}
C(p) = \left[\begin{matrix}
1 & (1-p)^2 & (1-p)^2 \\
(1-p)^2 & 1 & (1-p)^2 \\
(1-p)^2 & (1-p)^2 & 1
\end{matrix}\right],
\quad \tilde{P}^{(1)} = \left[\begin{matrix} 0 & 0 & 0\\
0 & 1 & 0 \\
0 & 0 & 1 
\end{matrix}\right], \quad \tilde{P}^{(2)} = \left[\begin{matrix} 1 & 0 & 0\\
0 & 0  & 0 \\
0 & 0 & 1 
\end{matrix}\right],\quad \tilde{P}^{(3)} = \left[\begin{matrix} 1 & 0 & 0\\
0 & 1 & 0 \\
0 & 0 & 0
\end{matrix}\right].
\end{equation*}

As a corollary of Proposition \ref{PropCovDec} we here determine the `transition point' $\overline{p}(B)$ for the family $\tilde{P}^{M:D}_p$ in the triangular scenario.

\begin{Lemma}
\label{bvdsaajkvb}
For $p\in \mathbb{R}$ it is the case that 
$\left[\begin{smallmatrix}
\frac{1}{2} & (1-p)^2\\
(1-p)^2 & \frac{1}{2}
\end{smallmatrix}\right]\geq 0$ $\Leftrightarrow$ $1 - \frac{1}{\sqrt{2}}\leq p \leq 1+\frac{1}{\sqrt{2}}$.
\end{Lemma}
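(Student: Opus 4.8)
This is an elementary exercise about when a symmetric $2\times 2$ matrix is positive semidefinite, so the plan is simply to invoke the standard characterization: a symmetric real matrix $\left[\begin{smallmatrix} a & b \\ b & c\end{smallmatrix}\right]$ is positive semidefinite if and only if $a\geq 0$, $c\geq 0$, and the determinant $ac-b^2\geq 0$. I would state this criterion explicitly (it follows from the fact that a symmetric matrix is positive semidefinite iff all its principal minors are nonnegative, or one can just complete the square in the associated quadratic form).

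First I would apply the criterion to the matrix in question. The diagonal entries are both $1/2 > 0$, so those two conditions hold automatically, and the positive semidefiniteness reduces to the single determinant inequality
\begin{equation*}
  \frac14 - (1-p)^4 \geq 0,
  \quad\text{i.e.}\quad
  (1-p)^4 \leq \frac14 .
\end{equation*}
Taking fourth roots (both sides nonnegative) this is equivalent to $(1-p)^2 \leq 1/\sqrt{2}$, hence to $|1-p| \leq 2^{-1/4}$.

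Wait — I should double-check the exponent against the claimed bound. The claimed answer is $1-\tfrac{1}{\sqrt2}\leq p\leq 1+\tfrac{1}{\sqrt2}$, i.e. $|1-p|\leq 1/\sqrt2$. That corresponds to the condition $(1-p)^2\le 1/2$, which is the determinant condition $\tfrac14-(1-p)^2\ge0$; so in fact the off-diagonal entry to be squared in the determinant is $(1-p)^2$ itself and the relevant minor is $\tfrac14-\big((1-p)^2\big)^2$ only if one (incorrectly) re-squares. The correct determinant of $\left[\begin{smallmatrix}\frac12 & (1-p)^2\\ (1-p)^2 & \frac12\end{smallmatrix}\right]$ is $\tfrac14-(1-p)^4$; but note that for this to match the stated Lemma the entry must be read so that squaring it gives $\tfrac14 \ge (1-p)^{4}$... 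In any case, the clean route that reproduces the stated bound is: the matrix is PSD iff its determinant is nonnegative, and rearranging $\det \ge 0$ yields $(1-p)^2 \le \tfrac12$ after taking a square root of $ (1-p)^4\le\tfrac14$, i.e. $\lvert 1-p\rvert\le 2^{-1/2}$, which is exactly $1-\tfrac{1}{\sqrt2}\le p\le 1+\tfrac{1}{\sqrt2}$.

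There is essentially no obstacle here; the only thing to be careful about is the direction of the square-root steps (both sides must be verified nonnegative, which they are, so the equivalences are genuine bi-implications), and the final rearrangement $\lvert 1-p\rvert \le \tfrac{1}{\sqrt2} \iff 1-\tfrac{1}{\sqrt2}\le p\le 1+\tfrac{1}{\sqrt2}$, which is immediate. So the write-up would be: state the principal-minor criterion, reduce to the determinant inequality, solve it by successive square roots, and read off the interval for $p$.
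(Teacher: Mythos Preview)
Your approach is correct: the paper states this lemma without proof, and the principal-minor / determinant criterion you invoke is exactly the right tool. The computation $\det = \tfrac14 - (1-p)^4 \ge 0 \Leftrightarrow (1-p)^2 \le \tfrac12 \Leftrightarrow |1-p|\le 1/\sqrt{2}$ is valid, though your write-up should be cleaned of the mid-proof second-guessing (the determinant really is $\tfrac14-(1-p)^4$, and taking the nonnegative square root once gives $(1-p)^2\le\tfrac12$, which is the bound).
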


\begin{Lemma}
\label{nklknbj}
Let $a,b,r\in \mathbb{C}$, then 
$\left[\begin{smallmatrix}
a & r \\
r & b
\end{smallmatrix}\right]\geq 0$ $\Leftrightarrow$ $\left[\begin{smallmatrix}
b & r \\
r & a
\end{smallmatrix}\right]\geq 0$.
\end{Lemma}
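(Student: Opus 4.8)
The plan is to realize the transformation $a\leftrightarrow b$ (with $r$ held fixed) as conjugation by a fixed invertible matrix, and then to invoke the standard fact that congruence by an invertible matrix preserves positive semidefiniteness in both directions.

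\medskip\noindent\textbf{Step 1.}
Introduce the swap matrix $\sigma := \left[\begin{smallmatrix} 0 & 1 \\ 1 & 0 \end{smallmatrix}\right]$, which is unitary (in particular invertible) and self-adjoint, and verify by a one-line multiplication that
\begin{equation*}
\sigma \left[\begin{matrix} a & r \\ r & b \end{matrix}\right] \sigma^{\dagger} = \left[\begin{matrix} b & r \\ r & a \end{matrix}\right] .
\end{equation*}
This is the only computation in the argument, and it is immediate.

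\medskip\noindent\textbf{Step 2.}
Recall (or quickly re-derive) that for any matrix $M$ and any invertible $S$ one has $M\geq 0 \Leftrightarrow SMS^{\dagger}\geq 0$: for every vector $w$, $w^{\dagger}(SMS^{\dagger})w = (S^{\dagger}w)^{\dagger} M (S^{\dagger}w)$, and the map $w\mapsto S^{\dagger}w$ is a bijection of the underlying space, so one inequality holds for all $w$ exactly when the other does. Applying this with $S=\sigma$ and $M=\left[\begin{smallmatrix} a & r \\ r & b \end{smallmatrix}\right]$, together with Step 1, gives the claimed equivalence. The same reasoning applies verbatim regardless of whether $r$ is real (i.e.\ regardless of whether the matrices are Hermitian), since it only uses invertibility of $\sigma$.

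\medskip
There is essentially no obstacle: the only thing to watch is the bookkeeping in Step 1, namely that conjugation by $\sigma$ swaps the two diagonal entries while leaving the off-diagonal entries untouched, which is transparent. (An alternative route via the $2\times2$ leading-principal-minor / determinant criterion for positivity would also work, but the congruence argument is shorter and more structural, and it is the version that would dovetail with the surrounding use of such $2\times2$ blocks, e.g.\ Lemma~\ref{bvdsaajkvb}.)
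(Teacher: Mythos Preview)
Your proof is correct. The paper states this lemma without proof, treating it as self-evident; your argument via conjugation by the unitary swap matrix $\sigma$ is the natural one-line justification and there is nothing further to compare.
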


\begin{Corollary}
\label{PropTransition}
For the family $\tilde{P}^{3:D}_p$ in equation (\ref{ndvklanlkv}), and for feature maps with orthonormal components, the covariance matrix $\mathrm{Cov}(Y)$ has a semidefinite decomposition with respect to the triangular bipartite DAG $B$ in figure \ref{FigTriangle}, if and only if $1-\frac{1}{\sqrt{2}}\leq p\leq 1$. Hence, $\overline{p}(B) = 1-1/\sqrt{2}$.
\end{Corollary}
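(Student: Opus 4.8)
The plan is to reduce everything to the $3\times 3$ matrix $C(p)$ via Proposition \ref{PropCovDec}, which tells us that, for orthonormal feature maps, $\mathrm{Cov}(Y)$ admits the semidefinite decomposition (\ref{djfvna}) for the triangular DAG $B$ if and only if $C(p)$ admits the reduced decomposition (\ref{mjmh}) with the projectors $\tilde P^{(1)},\tilde P^{(2)},\tilde P^{(3)}$ displayed after (\ref{ndvklanlkv}). From those projectors one reads off the support structure: the constraint $\sum_m \tilde P_m\tilde R\tilde P_m=\tilde R$ forces $\tilde R$ to be diagonal, $\tilde C_1$ is supported on coordinates $\{2,3\}$, $\tilde C_2$ on $\{1,3\}$, and $\tilde C_3$ on $\{1,2\}$. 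Since all off-diagonal entries of $C(p)$ equal $(1-p)^2$ and each off-diagonal position lies in the support of exactly one $\tilde C_n$, any decomposition must satisfy $[\tilde C_1]_{23}=[\tilde C_2]_{13}=[\tilde C_3]_{12}=(1-p)^2$, and each $\tilde C_n$, restricted to its two-dimensional support, is a positive semidefinite $2\times 2$ matrix with off-diagonal entry $(1-p)^2$ and non-negative diagonal entries, which I will call $a_n,b_n$.

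For the ``only if'' direction, suppose such a decomposition exists. Averaging the $2\times2$ block of $\tilde C_n$ with the block obtained by exchanging its diagonal entries — still positive semidefinite by Lemma \ref{nklknbj} — yields a positive semidefinite matrix with equal diagonal entries $\tfrac12(a_n+b_n)$ and off-diagonal $(1-p)^2$, whence $\tfrac12(a_n+b_n)\ge (1-p)^2$ for $n=1,2,3$ (equivalently, this is the determinant/AM--GM bound $a_nb_n\ge(1-p)^4$). Now sum the three diagonal constraints of (\ref{mjmh}): for each coordinate $m$, the entry $C(p)_{mm}=1$ equals $[\tilde R]_{mm}$ plus the diagonal contributions of the two $\tilde C_n$'s whose support contains $m$, so $3=\sum_m[\tilde R]_{mm}+\sum_{n=1}^3(a_n+b_n)$. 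Using $[\tilde R]_{mm}\ge0$ gives $3(1-p)^2\le \tfrac12\sum_{n=1}^3(a_n+b_n)\le \tfrac32$, i.e.\ $(1-p)^2\le\tfrac12$. Since $0\le p\le1$ for the family $\tilde P^{3:D}_p$, this forces $p\ge 1-1/\sqrt2$.

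For the ``if'' direction, given $1-1/\sqrt2\le p\le1$ I would exhibit an explicit decomposition: take $\tilde R=0$ and let each $\tilde C_n$ act on its two-dimensional support as $\left[\begin{smallmatrix}\tfrac12 & (1-p)^2\\ (1-p)^2 & \tfrac12\end{smallmatrix}\right]$. By Lemma \ref{bvdsaajkvb} this matrix is positive semidefinite precisely because $1-1/\sqrt2\le p\le 1+1/\sqrt2$, which holds here. A direct check shows $\tilde C_1+\tilde C_2+\tilde C_3=C(p)$: each off-diagonal entry is supplied by the unique component whose support contains the corresponding pair, and each diagonal entry receives $\tfrac12+\tfrac12=1$. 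Proposition \ref{PropCovDec} then lifts this to a valid decomposition of $\mathrm{Cov}(Y)$. Combining the two directions, and recalling from Proposition \ref{PropCovDec} the existence of a $D$-independent threshold $\overline p(B)$, we conclude $\overline p(B)=1-1/\sqrt2$.

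The only step that is more than bookkeeping is the counting argument in the second paragraph: one must notice that the support pattern of the triangular DAG makes $\tilde R$ diagonal and assigns each off-diagonal entry of $C(p)$ to a single component, so that the fixed ``diagonal budget'' $\sum_m C(p)_{mm}=3$ is shared between $\tilde R$ and the $\tilde C_n$, and then pair this with the $2\times2$ positivity bound. Everything else is routine verification.
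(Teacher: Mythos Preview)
Your proof is correct and follows essentially the same route as the paper's: both reduce to $C(p)$ via Proposition~\ref{PropCovDec}, use the same explicit decomposition $\tilde R=0$, $\tilde C_n=\left[\begin{smallmatrix}\tfrac12&(1-p)^2\\(1-p)^2&\tfrac12\end{smallmatrix}\right]$ for the ``if'' direction, and for the ``only if'' direction combine the symmetrization of Lemma~\ref{nklknbj} with the fixed diagonal budget. The only cosmetic difference is bookkeeping: the paper first absorbs the diagonal $\tilde R$ into the $\tilde C_n$'s (so that $a_1+a_2=b_1+b_2=c_1+c_2=1$) and then averages all six $2\times2$ blocks at once to land directly on $\left[\begin{smallmatrix}\tfrac12&(1-p')^2\\(1-p')^2&\tfrac12\end{smallmatrix}\right]$, contradicting Lemma~\ref{bvdsaajkvb}; you instead keep $\tilde R$ separate, symmetrize each $\tilde C_n$ individually to get $\tfrac12(a_n+b_n)\ge(1-p)^2$, and then sum the trace constraint---the same inequality emerges either way.
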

One may note that $\tilde{P}^{3:D}_p$ has a semidefinite decomposition also in the case $p = 1-1/\sqrt{2}$, i.e., at the transition point. Proposition \ref{PropCovDec} does strictly speaking leave open the nature of the transition point \emph{per se}.
\begin{proof}
By Proposition \ref{PropCovDec} we know that it is sufficient to determine the $p$ for which $C(p)$ decomposes as in (\ref{mjmh}).
Due to Lemma \ref{bvdsaajkvb} it follows that 
\begin{equation*}
\tilde{R} = 0,\,\, \tilde{C}_1 = \left[\begin{smallmatrix}
0 & 0 &0\\
0 & \frac{1}{2} & (1-p)^2\\
0 & (1-p)^2 & \frac{1}{2}
\end{smallmatrix}\right],\,\,  \tilde{C}_2 = \left[\begin{smallmatrix}
 \frac{1}{2} & 0 & (1-p)^2\\
0 & 0 & 0\\
 (1-p)^2 & 0 & \frac{1}{2}
\end{smallmatrix}\right],\,\,  \tilde{C}_3 = \left[\begin{smallmatrix}
 \frac{1}{2} &  (1-p)^2 & 0\\
 (1-p)^2 & \frac{1}{2} & 0\\
0 & 0 & 0
\end{smallmatrix}\right]
\end{equation*}
satisfy the decomposition (\ref{mjmh}) for all $1 - 1/\sqrt{2}\leq p \leq 1$.
However, this does not exclude the possibility that there exists some other decomposition that yields a smaller $p$.

Suppose that $0 \leq p'< 1-1/\sqrt{2}$.
By the structure of the triangular DAG, it follows that the most general decomposition of the form (\ref{mjmh}) possible (incorporating the diagonal matrix $\tilde{R}$ into $\tilde{C}_1$, $\tilde{C}_2$, and $\tilde{C}_3$) can be written $C(p) = \tilde{C}_1 +\tilde{C}_2 +\tilde{C}_3$, where
\begin{equation*}
\tilde{C}_1 = \left[\begin{smallmatrix}
0 & 0 &0\\
0 & b_2 & (1-p')^2\\
0 & (1-p')^2 & c_1
\end{smallmatrix}\right],\,\,  \tilde{C}_2 = \left[\begin{smallmatrix}
 a_1 & 0 & (1-p')^2\\
0 & 0 & 0\\
 (1-p')^2 & 0 & c_2
\end{smallmatrix}\right],\,\,  \tilde{C}_3 = \left[\begin{smallmatrix}
 a_2 &  (1-p')^2 & 0\\
 (1-p')^2 & b_1 & 0\\
0 & 0 & 0
\end{smallmatrix}\right],
\end{equation*}
and where $a_1,a_2,b_1,b_2,c_1,c_2\geq 0$ and $a_1 +a_2 = 1$, $b_1 +b_2 = 1$, $c_1 + c_2 = 1$. 
By the assumed semidefiniteness of $\tilde{C}_1$, $\tilde{C}_2$, and $\tilde{C}_3$, it follows that 
\begin{equation}
\label{cghcfgx}
\begin{split}
 M_1:= \left[\begin{smallmatrix}
a_1 & (1-p')^2 \\
(1-p')^2 & c_2 
\end{smallmatrix}\right] \geq 0,\quad M_2:= \left[\begin{smallmatrix}
a_2 & (1-p')^2 \\
 (1-p')^2  & b_1\\
\end{smallmatrix}\right]\geq 0,\quad  M_3 :=\left[\begin{smallmatrix}
 b_2 & (1-p')^2 \\
 (1-p')^2  & c_1\\
\end{smallmatrix}\right]\geq 0.
\end{split}
\end{equation}
 By Lemma \ref{nklknbj} it follows that (\ref{cghcfgx}) implies
\begin{equation*}
\begin{split}
 M_4:= \left[\begin{smallmatrix}
c_2 & (1-p')^2 \\
(1-p')^2 & a_1 
\end{smallmatrix}\right] \geq 0,\quad M_5:= \left[\begin{smallmatrix}
b_1 & (1-p')^2 \\
 (1-p')^2  & a_2\\
\end{smallmatrix}\right]\geq 0,\quad M_6 :=\left[\begin{smallmatrix}
 c_1 & (1-p')^2 \\
 (1-p')^2  & b_2\
\end{smallmatrix}\right]\geq 0.
\end{split}
\end{equation*}
Since these matrices all are positive semidefinite, it follows that every convex combinations of them is also positive semidefinite. 
Thus one can confirm that
\begin{equation}
\begin{split}
\left[\begin{matrix}
\frac{1}{2} & (1-p')^2 \\
(1-p')^2 & \frac{1}{2}
\end{matrix}\right]  = \frac{1}{6}M_1  + \frac{1}{6}M_2 + \frac{1}{6}M_3 +\frac{1}{6}M_4 +  \frac{1}{6}M_5 +\frac{1}{6}M_6  \geq 0.
\end{split}
\end{equation}
However, the positive semidefiniteness of this matrix is a contradiction to Lemma \ref{bvdsaajkvb}, since by assumption $p'< 1-1/\sqrt{2}$. Hence, $C(p)$ can only have a decomposition as in (\ref{mjmh}) if $1-1/\sqrt{2}\leq p\leq 1$. By Proposition \ref{PropCovDec} it thus follows that $\mathrm{Cov}(Y)$ satisfies the semidefinite decomposition as in (\ref{djfvna}) if and only if $1-1/\sqrt{2}\leq p\leq 1$.

\end{proof}

\section{\label{SecComparison} Comparison with entropic tests}
Outer relaxations of the compatibility set corresponding to latent variable structures, based on information theoretic inequalities, have been considered previously \cite{steudel2015information,Chaves2014,Chaves2014b,weilenmann2016non}. Here we make a numerical comparison of the performance of these entropic tests and the semidefinite test. A basic challenge is that we in practice do not know the true set of compatible distributions. However, since we are dealing with outer approximations, a reasonable approach is to compare how `strict' the tests are, i.e., if one test generally tends to reject more distributions than the other. 

Given the rather radical difference in appearance and functional form between the semidefinite test and tests based on entropy inequalities (described in more detail in the next section) it is far from clear how these tests relate, or if there even is a clear-cut relation in the sense that one would be systematically stronger than the other. An indication can be gained from \cite{VonPrillwitz15MasterThesis}, where it was found that tests based on operator inequalities, of the type described in section \ref{SecOperatorInequalities}, appear to be stronger than the entropic ones for small alphabet sizes, but that there seems to be a switchover for larger alphabets (see section 4.5 of \cite{VonPrillwitz15MasterThesis}). Here we confirm similar trends for the semidefinite test in comparison with the entropic test, where we focus on the `triangular' DAG described in figure \ref{FigTriangle}. 
In case of binary variables, we do in section \ref{SecRandomIsing} make a comparison over an ensemble of randomly constructed distributions. However, our major testbed for these comparisons (in section \ref{SecComparisonMonotone}) is the family of distributions $\tilde{P}^{M:D}_p$ introduced in section \ref{SecMonotoneFamily}.

\subsection{Entropy inequalities for the triangular DAG}
We focus on the triangular DAG in figure \ref{FigTriangle}, since this has been a rather well investigated scenario with several known entropic inequalities associated with it. For the three observable variables $O_1,O_2,O_3$  we let $H(1) := H(O_1) := -\sum_j P(O_1 = j)\log_{2}P(O_1 = j)$ denote the Shannon entropy, and in a similar manner $H(12) := H(O_1,O_2)$, etc, where `$\log_2$' denotes the base $2$ logarithm. The first inequality  (\ref{ineqE1}) for the triangular scenario was obtained  in \cite{Fritz2012} (see also \cite{Chaves2014} and \cite{weilenmann2016non})
\begin{equation}
\label{ineqE1} E_1  :=  - H(1)-H(2)-H(3)+H(13) +H(12) \geq 0.
\end{equation}
The following two inequalities were derived in \cite{Chaves2014}
\begin{equation}
\begin{split}
\label{ineqE2}
 E_2  := & -3H(1) -3H(2) -3H(3) + 2H(12) + 2H(13) + 3H(23) -H(123)\geq 0,
 \end{split}
\end{equation}
\begin{equation}
\begin{split}
\label{ineqE3}  E_3  := & -5H(1) - 5H(2) -5H(3)  + 4H(12) + 4H(13) + 4H(23) -2H(123)\geq 0.
 \end{split}
\end{equation}
Finally, inequalities (\ref{ineqE4}) to (\ref{ineqE6}) were obtained in \cite{weilenmann2016non} 
\begin{equation}
\begin{split}
 \label{ineqE4} E_4  := & -4H(1) -4H(2)-4H(3) + 3H(12) + 3H(13) + 4H(23) -2H(123)\geq 0,\\
 \end{split}
\end{equation}
\begin{equation}
\begin{split}
\label{ineqE5} E_5  := & -2H(1)-2H(2)-2H(3) + 3H(12) + 3H(13) + 3H(23) -4H(123)\geq 0,\\
 \end{split}
\end{equation}
\begin{equation}
\begin{split}
\label{ineqE6} E_6  := & -8H(1)-8H(2)-8H(3)  +7H(12) +7H(13) + 7H(23)-5H(123) \geq 0. 
 \end{split}
\end{equation}
One should observe that the expressions in (\ref{ineqE1}), (\ref{ineqE2}), and (\ref{ineqE4}) are not symmetric under permutations of the $O_1,O_2,O_3$, and  thus each of these generate two more inequalities. Whenever one of these inequalities is violated we can conclude that the observable distribution cannot originate from the bipartite DAG in figure \ref{FigTriangle}.

One may note that all of these entropic inequalities, apart from  (\ref{ineqE1}), depend on the full tripartite distribution, while the semidefinite test only takes into account the mono- and bipartite marginals.  One may thus intuitively suspect that the semidefinite test would be at a disadvantage compared to these tripartite entropic tests.

\subsection{\label{SecRandomIsing}Rejection rates in random Ising models: The binary case}
For a numerical comparison between the entropic and the semidefinite test for the triangular scenario in figure \ref{FigTriangle}, we assume binary variables $O_1,O_2,O_3\in\{-1,1\}$, and distributions $P(\overline{x}) := P(O_1 =x_1,O_2=x_2,O_3 = x_3)$, $\overline{x} := (x_1,x_2,x_3)$ given by an Ising interaction model \cite{GallavottiStatisticalMechanics,KollerProbabilisticGraphicalModels}
\begin{equation}
\label{IsingModel}
P(\overline{x})  = \frac{e^{- \overline{x}^{\dagger}J\overline{x}}}{Z},
\end{equation}
with $Z$ being the normalization constant, and where $J$ is a real $3\times 3$ matrix. For each single instance of this model we draw the elements of $J$ independently from a Gaussian distribution with zero mean  and variance $1$. 

For the semidefinite test we choose (universal) feature maps that associate the outcomes of the random variables to elements of orthonormal bases, thus resulting in a $6\times 6$ covariance matrix. The semidefinite test was implemented via a semidefinite program that minimizes a constant function, thus effectively testing whether there exist any feasible elements.

 For each instance over  $10^6$ independent repetitions of the Ising model in (\ref{IsingModel}) we performed the semidefinite test, as well as tested the entropic inequalities (\ref{ineqE1}) to (\ref{ineqE6})  together with all their permutations.

 The following table gives the approximate fraction of rejections. In the table,  $E^{\cup}_1$ (and analogously for $E^{\cup}_2$ and $E^{\cup}_4$) means that  we test the inequality in (\ref{ineqE1}) as well as its two permutations, and we count the fraction of the sample that violates any of these three inequalities, i.e., we take the union of the corresponding rejection regions. The entry `Combined' signifies the fraction of rejections due to violations of at least one of the inequalities (\ref{ineqE1}) to (\ref{ineqE6}) or any of their permutations. Finally `Semidefinite'  denotes the fraction of rejections for the semidefinite test.

\begin{equation*}
\begin{split}
& E_1^{\cup}:   0.57,\quad  E_2^{\cup}:  0.60, \quad E_3: 0.54, \\
& E_4^{\cup}:   0.63,\quad   E_5:  0.40, \quad E_6: 0.60,\\
& \textrm{Combined}:  0.64,\\
& \textrm{Semidefinite}:  0.77
\end{split}
\end{equation*}

Since the fraction of rejections is higher for the semidefinite test than for all the entropic inequalities combined, this suggests that the semidefinite test in some sense has a   `larger' region of rejection, and thus would be the stronger test. To get some information on the relation between the two regions of rejections, we checked whether we could find any case where the semidefinite test accepted an instance that had been rejected by some of the entropic inequalities. However, we could find no such case, which suggests that the region of rejection for the collection of entropic inequalities is contained in the region of  rejection for the semidefinite test.

\subsection{\label{SecComparisonMonotone}Comparison on a monotone family of distributions}

In section \ref{SecMonotonicity} we argued that the compatibility of distributions with respect to a given bipartite DAG is monotonous under local operations, and that the semidefinite test  also satisfies this property if we use universal feature maps.
In section \ref{SecTripartiteMonotone} we introduced a particular tripartite family of distributions  $\tilde{P}^{3:D}_{p}$ that can be generated from the appropriate maximally correlated distribution by local operations, and where we could show that this family cut the boundary of the semidefinite compatibility region at $p = 1-1/\sqrt{2}$. Here we compare the performance of the entropic tests with the semidefinite test on this particular family of distributions. 

\subsubsection{\label{SecBinary}Binary variables}

We begin in the case of three binary variables, i.e., each variable can take two possible values. In this case $\tilde{P}^{3:2}_{p}$ reduces to 
\begin{equation}
\label{tildeP3} 
\begin{split}
\tilde{P}^{3:2}_p(\tilde{x}_1,\tilde{x}_2,\tilde{x}_3) = \left\{\begin{matrix} 
\frac{1}{8}(4-6p+3p^2),\quad \textrm{if}\quad \tilde{x}_1 = \tilde{x}_2 = \tilde{x}_2,\\
\frac{1}{8}p(2-p),\quad \textrm{otherwise}.
\end{matrix}\right.
\end{split}
\end{equation}

\begin{figure}[h!]
 \includegraphics[width= 9cm]{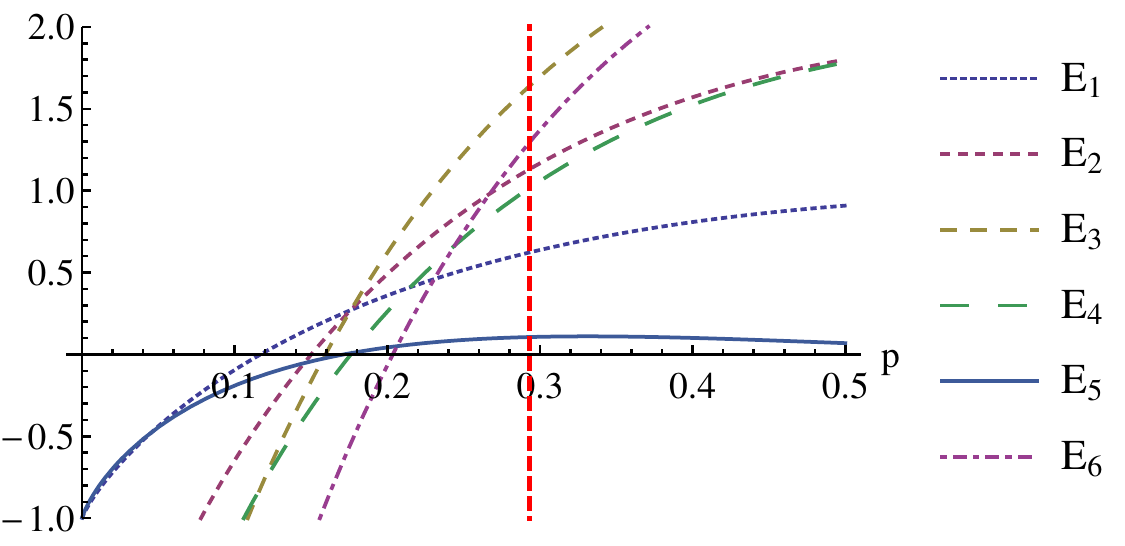} 
\caption{\label{FigEntropyBinary} {\bf Entropic versus semidefinite for binary variables.} For three binary variables described by the distribution $\tilde{P}^{3:2}_{p}$ in equation (\ref{tildeP3}), we calculate $E_1,\ldots, E_6$ defined in (\ref{ineqE1}) to (\ref{ineqE6}) as functions of the parameter $p$. When one of these functions turns negative, it implies that the distribution  $\tilde{P}^{3:2}_{p}$ is not compatible with the triangular bipartite DAG in figure \ref{FigTriangle}. Moreover, we determine the $6\times 6$ covariance matrix with respect to feature maps that assign orthogonal vectors to the outcomes. The red vertical line indicates the value $p = 1-1/\sqrt{2} \approx 0.29$, determined in section \ref{SecTripartiteMonotone}, below which the semidefinite test rejects the resulting covariance matrix. As one can see, the semidefinite test has the larger region of rejection, and is in this sense the stronger test for this particular binary setup.
}
\end{figure}

In figure \ref{FigEntropyBinary} we plot $E_1,\ldots,E_6$ as functions of the parameter $p$. The entropic test rejects the model for a given $p$ whenever one of these functions become negative.
 For the calculation of the covariance matrix we choose feature maps that assign orthonormal vectors to the outcomes of the three random variables, thus being universal.  As one can see from figure  \ref{FigEntropyBinary}, the semidefinite test starts to reject at higher values of $p$ than all the entropic tests, and is thus closer to the true value $p^{*}$ of the transition than any of the entropic tests.

\subsubsection{\label{SecE1asymptotics}Asymptotics of the $E_1$ test}

 $E_1$ defined in (\ref{ineqE1}) is the only of the entropic quantities (\ref{ineqE1}) to (\ref{ineqE6})  that solely includes mono- and bipartite marginals; the others also depend on the full tripartite distribution. Since the test based on $E_1$ and the semidefinite test thus are on `equal footing' in this regard, it appears relevant to pay some additional attention to the relation between these two tests. In section \ref{SecTripartiteMonotone}, and in particular in Corollary \ref{PropTransition},   we proved that the distribution $\tilde{P}^{3:D}_p$, defined in equation (\ref{ndvklanlkv}), satisfies the semidefinite test if and only if $p\geq 1-1/\sqrt{2}$, irrespective of the alphabet size $D$. Hence, the `transition point' for the semidefinite test is independent of $D$ for this particular family of distributions. Here we shall show that the corresponding transition point for the test based on $E_1$ lies below $1-1/\sqrt{2}$, but asymptotically approaches this value as $D$ increases.

The family of distributions $\tilde{P}^{3:D}_p$ in (\ref{ndvklanlkv}) is permutation symmetric with respect to the three parties, and $E_1$ can, via equation (\ref{BinaryAndMono}), be evaluated as
\begin{equation}
\label{nfajlbakl}
\begin{split}
E_1 = & -3H(1) +2H(12)\\
= &  -3\log D \\
& -2(1-\frac{1}{D}) p(2-p)\log \Big[ p(2-p)\frac{1}{D^2}\Big]\\
& -2\Big[(1-p)^2+ p(2-p)\frac{1}{D}\Big]\log \Big[(1-p)^2\frac{1}{D}+ p(2-p)\frac{1}{D^2}\Big].
\end{split}
\end{equation}

One can  confirm that $E_1(0) = -\log D$, $E_1(1) = \log D$, and 
\begin{equation}
\begin{split}
\frac{dE_1}{dp} =  4(1- \frac{1}{D})(1-p)\log\Big[1 + D\frac{(1-p)^2}{p(2-p)}\Big],
\end{split}
\end{equation}
which is non-negative for $0\leq p\leq 1$. Hence, for each fixed $D$, the function $E_1$ is monotonically increasing  for $0\leq p\leq 1$, and thus the equation $E_1(p) = 0$  has exactly one root, which is situated somewhere in the open interval $(0,1)$. Thus, analogous to the semidefinite test, the test based on $E_1$ will reject all elements in the family $\tilde{P}^{3:D}_p$ below a certain transition point, and accept all distributions above that value.
Next one can confirm that 
\begin{equation*}
E_1(1-\frac{1}{\sqrt{2}}) = \log \frac{2D}{D+1}   +\frac{1}{D} \log\frac{2^D}{ D+1}>0,\quad D = 2,3,\ldots.
\end{equation*}
Since $E_1$ thus is monotonously increasing with respect to $p$, we can conclude that the root $\tilde{p}$ of $E_1(\tilde{p}) = 0$ is such that  $\tilde{p} < 1-1/\sqrt{2}$ for all $D\geq 2$. Finally we wish to the determine the asymptotic value of the root $\tilde{p}$ as $D\rightarrow\infty$. To this end we rewrite (\ref{nfajlbakl}) such that we highlight the different orders of dependency on $D$,
\begin{equation}
\label{adflbnknbl}
\begin{split}
E_1 =&  2\big[1+\frac{1}{\sqrt{2}}-p\big]\big[p-(1 -\frac{1}{\sqrt{2}})\big]  \log D\\
& -2 p(2-p)\log[p(2-p)] -4(1-p)^2\log (1-p)\\
& -2 p(2-p)\frac{1}{D}\log D\\
& -2(1-p)^2\log \Big[1+\frac{p(2-p)}{(1-p)^2}\frac{1}{D}\Big]\\
& +2p(2-p)\frac{1}{D}\log\left[\frac{p (2-p)}{(1-p)^2}\right]\\
& -2p(2-p)\frac{1}{D}\log \Big[1+\frac{p(2-p)}{(1-p)^2}\frac{1}{D}\Big].
\end{split}
\end{equation}
On any interval $\delta \leq p \leq 1-\delta$, with $1/2>\delta >0$, the last four lines of (\ref{adflbnknbl}) each approaches zero as $D\rightarrow\infty$.
Moreover, one can note that the leading order term (in the first line) does for each fixed $D$ increase monotonically for $0 \leq p\leq 1$ and switches from negative to positive at $p = 1-1/\sqrt{2}$. If one fixes  $\epsilon >0$, one can realize that for all sufficiently large $D$, it is the case that $E_1(p)>0$ for all $1-1/\sqrt{2} +\epsilon \leq p\leq 1-\delta$, and $E_1(p) \leq 0$ for all $0\leq p\leq 1-1/\sqrt{2}-\epsilon$. We can thus conclude that the root $\tilde{p}$
 of $E_1(\tilde{p}) =0$ in the interval $0\leq \tilde{p}\leq 1$ approaches $1-1/\sqrt{2}$ as $D\rightarrow \infty$.

\subsubsection{\label{SecIncreasingAlphabets}Comparison on increasing alphabets}

\begin{figure}[h!]
 \includegraphics[width= 15cm]{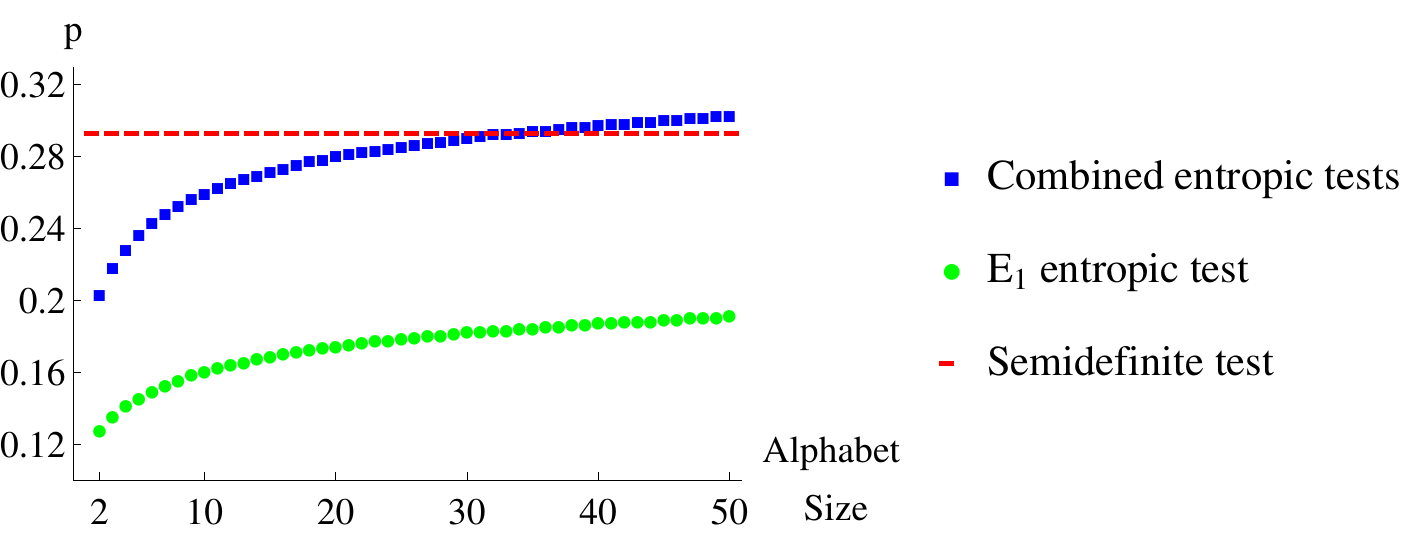} 
\caption{\label{FigAlphabetSize} {\bf Entropic versus semidefinite tests for increasing alphabet sizes.} 
For the distribution $\tilde{P}^{3:D}_{p}$ in (\ref{ndvklanlkv}) we compare the entropic and semidefinite test as functions of $D$.
Here we determine the smallest value of $p$ for which the respective test accepts $\tilde{P}^{3:D}_{p}$, as a function of the local alphabet size $D$. From section \ref{SecTripartiteMonotone} we know that the transition point for the semidefinite test is $p = 1-1/\sqrt{2} \approx 0.29$,  independently of $D$ (the red dashed line). We also plot (blue squares) the minimal value of $p$ for which all of the entropic inequalities (\ref{ineqE1}) to (\ref{ineqE6}) are satisfied, as a function of $D$. The transition point for this entropic test crosses the red line at $D = 32$. Hence, for the class of functions $\tilde{P}^{3:D}_{p}$, the entropic tests becomes stronger than the semidefinite test for alphabet sizes beyond $32$. Finally, we plot (green circles) the minimal value of $p$ for which $E_1(p)\geq 0$, as a function of $D$. By section \ref{SecE1asymptotics} we know that this transition point asymptotically reaches $1-1/\sqrt{2}$.
}
\end{figure}

In the previous section we found that the semidefinite test is stronger than the entropic one, for testing membership of distributions of the form $\tilde{P}^{3:2}_p$. Here, we investigate how these two classes of tests compare when the size $D$ of the  local alphabets increases. We know from section \ref{SecTripartiteMonotone} that the semidefinite test is independent of $D$ for this particular family of distributions. It could thus potentially be the case that the entropic test would become stronger than the semidefinite test for sufficiently large alphabet sizes. This is indeed what we find in the numerical evaluation of the entropic test, which we  display in figure \ref{FigAlphabetSize}. 

As pointed out in section \ref{SecE1asymptotics}, all the entropic inequalities, apart from $E_1$, depend on the full tripartite distribution, while $E_1$ and the semidefinite test only utilize the bi- and mono-partite margins. We already know from the previous section that the test based on  $E_1$  always is weaker that the semidefinite test for the family $\tilde{P}^{3:D}_p$, but that it approaches the semidefinite test in the limit of large alphabet sizes $D$. As suggested by the plot in figure \ref{FigAlphabetSize}, the convergence is very slow. As an additional indication one may note that for an alphabet size of $D = 10^7$ the root of the equation $E_1(p) = 0$ is $p\approx 0.26$ while the limit is $p\approx 0.29$.

\section{\label{SecSummaryOutlook} Summary and outlook}

In this work we have considered the constraints imposed by a large class of causal structures on the covariance matrix of the observed variables. More specifically, we have shown that each bipartite DAG induces a decomposition that every covariance matrix resulting from the corresponding causal model has to satisfy. Such decompositions can be formulated in terms of semidefinite programs that allow for a straightforward and efficient computational treatment of the problem (as opposed to algebraic geometry solutions). A violation of the condition imposed by the bipartite DAG under test (or in other terms, the non-feasibility of the semidefinite program) thus implies that the observed covariance matrix is not compatible with it. We have also shown that every decomposition associated with a bipartite DAG can be realized by a causal model on that graph. 

Furthermore, we have made comparisons between the performance of the semidefinite test and tests based on information theoretic inequalities formulated in terms of entropies, where the results indicate that the semidefinite test outperforms the entropic test for moderate alphabet sizes of the random variables, while the latter become more powerful for large alphabet sizes.

These results open several directions for future research. Here, we have restricted attention to characterising the set of covariance matrices compatible with a given causal structure.
In real-world situations however, the covariance matrix is unknown and has to be estimated from a limited number of samples drawn from the underlying distribution.
This raises the question of how to turn the theory developed here into statistical hypothesis tests for a presumed causal stucture.
An obvious idea would be to construct a confidence region for the estimated covariance matrix and reject the hypothesis if the confidence region does not intersect the set compatible with the causal assumption.
We speculate, though, that it might be simpler to obtain statistically sound results by employing convex duality, as explained in the context of figure~\ref{fig:witness}.
Indeed, assume that $X$ is such that all compatible covariance matrices have non-negative inner product with $X$.
The inner product betweeen $X$ and the true covariance matrix is a scalar linear function of the distribution of the observable variables.
A one-sided statistical hypothesis test for $\mathrm{tr}\,\big(X\,\mathrm{Cov}(Y)\big) \leq 0$ with any desired significance level is therefore easy to construct.
It will automatically also test the causal hypothesis at the same significance level.
While any $X$ gives rise to such a test, their power to identify a given true incompatible distribution may very wildly.
One way of making an informed choice for $X$ would be as follows: Split the samples into two parts.
If the empirical covariance matrix of the first part is compatible with the hypothesis, accept. 
If not, the dual SDP (\ref{eqn:dual}) will identify a witness $X^\star$ that seperates the empirical matrix from the compatible set.
Now use the test based on $X^\star$ with the second part of the samples.
We leave the details to future work.

Another immediate question is to better understand the relation between the semidefinite and the entropic tests. Similarly, it would be highly desirable to combine our results with other tools that have very recently been proposed in order to characterize complex DAGs \cite{Chaves2016,Rosset2016,wolfe2016inflation}. On a more general level it is noteworthy that by restricting to covariance we turn a highly non-linear problem into what essentially is a convex optimization. Understanding how far this can be pushed (considering higher order moments, for instance) would certainly give us new geometric insights on the nature of this problem.
Since we here have focused on a setting where all correlations of observed variables are due to latent variables, it is very reasonable to ask if tests based on covariances can be extended to more general types of DAGs that do not have this bipartite structure.

From a more fundamental perspective our work may have implications for the current research program on the foundations of quantum physics. Bayesian networks have attracted growing attention as means to understand the role of causality in quantum mechanical systems \cite{Leifer2013,Fritz2012,Fritz2014,Henson2014,Chaves2015a,Piennar2014,ried2015quantum,Costa2016,horsman2016can}. One may thus ask whether the methods we have employed here can be generalized to the case of quantum causal structures, where for example some  nodes in the graph  represent quantum states without a classical analogue. Any positive results along this line would certainly be highly relevant in the context of quantum causal modeling and once more highlight the very fruitful interplay between the fields of causal inference and foundational aspects of quantum mechanics.

\begin{acknowledgments}

We thank Thomas Kahle and Johannes Textor for productive discussions during the early stages of this project.

This work has been supported by the Excellence Initiative of the German Federal and State Governments (Grants ZUK 43 and 81), the ARO under contract W911NF-14-1-0098 (Quantum Characterization, Verification, and Validation), and the DFG (SPP1798 CoSIP). 

\end{acknowledgments}
 
\bibliography{covariances}

\end{document}